\documentclass[12pt, final]{l4dc2022}


\usepackage[utf8]{inputenc} 
\usepackage[T1]{fontenc}    
\usepackage{hyperref}       
\usepackage{url}            
\usepackage{booktabs}       
\usepackage{amsfonts}       
\usepackage{nicefrac}       
\usepackage{microtype}      
\usepackage{enumitem}
\usepackage{graphicx}
\usepackage{algorithmic}
\usepackage{bbm}
\usepackage{amsmath,amssymb}
\usepackage[utf8]{inputenc}
\usepackage[english]{babel}
\usepackage{color}
\usepackage{xcolor}
\usepackage{stackrel}
\usepackage{multirow}
\usepackage{algorithm}
\usepackage{wrapfig}
\usepackage[normalem]{ulem}
\usepackage[most]{tcolorbox}
\usepackage{xfrac}
\usepackage{caption}

\DeclareMathOperator*{\argmin}{arg\,min}
\DeclareMathOperator*{\supp}{supp}
\DeclareMathOperator*{\rsupp}{rsupp}

\newtheorem{assumption}{Assumption}

\definecolor{myblue2}{HTML}{0D6CCB}
\definecolor{mygreen}{HTML}{3BBA08}
\definecolor{myred}{HTML}{C12505}

\title[i-SpaSP]{i-SpaSP: Structured Neural Pruning via Sparse Signal Recovery}
\usepackage{times}



\author{%
 \Name{Cameron R. Wolfe} \Email{crw13@rice.edu}\\
 \addr Department of Computer Science, Rice University, Houston, TX, USA.
 \AND
 \Name{Anastasios Kyrillidis} \Email{anastasios@rice.edu}\\
 \addr Department of Computer Science, Rice University, Houston, TX, USA.
}

\begin{document}

\maketitle

\begin{abstract}%
    We propose a novel, structured pruning algorithm for neural networks---the \textbf{i}terative, \textbf{Spa}rse \textbf{S}tructured \textbf{P}runing algorithm, dubbed as i-SpaSP.
    Inspired by ideas from sparse signal recovery, i-SpaSP operates by iteratively identifying a larger set of important parameter groups (e.g., filters or neurons) within a network that contribute most to the residual between pruned and dense network output, then thresholding these groups based on a smaller, pre-defined pruning ratio.
    For both two-layer and multi-layer network architectures with ReLU activations, we show the error induced by pruning with i-SpaSP decays polynomially, where the degree of this polynomial becomes arbitrarily large based on the sparsity of the dense network's hidden representations.
    In our experiments, i-SpaSP is evaluated across a variety of datasets (i.e., MNIST, ImageNet, and XNLI) and architectures (i.e., feed forward networks, ResNet34, MobileNetV2, and BERT), where it is shown to discover high-performing sub-networks and improve upon the pruning efficiency of provable baseline methodologies by several orders of magnitude.
    Put simply, i-SpaSP is easy to implement with automatic differentiation, achieves strong empirical results, comes with theoretical convergence guarantees, and is efficient, thus distinguishing itself as one of the few computationally efficient, practical, and provable pruning algorithms. 
\end{abstract}

\begin{keywords}
  Neural Network Pruning, Greedy Selection, Sparse Signal Recovery
\end{keywords}
\begin{sourcecode}
\href{https://github.com/wolfecameron/i-SpaSP}{\textcolor{blue}{$\text{https://github.com/wolfecameron/i-SpaSP}$}}
\end{sourcecode}

\section{Introduction}
\noindent \textbf{Background.}
Neural network pruning has garnered significant recent interest \citep{lth, rethink_pruning, conv_filt_prune}, as obtaining high-performing sub-networks from larger, dense networks enables a reduction in the computational and memory overhead of neural network applications \citep{han2015deep, han2015learning}.
Many popular pruning techniques are based upon empirical heuristics that work well in practice \citep{stable_lth, thinet, he_pruning}.
Generally, these methodologies introduce some notion of ``importance'' for network parameters (or groups of parameters) and eliminate parameters with negligible importance.

The empirical success of pruning methodologies inspired the development of pruning algorithms with theoretical guarantees \citep{sipping_nn, liebenwein2019provable, mussay2019data, baykal2018data, whats_hidden}.
Among such work, greedy forward selection (GFS) \citep{provable_subnetworks, log_pruning}---inspired by the Frank-Wolfe algorithm \citep{frank1956algorithm}---differentiated itself as a methodology that performs well in practice and provides theoretical guarantees.
However, GFS is inefficient in comparison to popular pruning heuristics.

\noindent
\textbf{This Work.}
We leverage greedy selection \citep{cosamp, khanna2018iht} to develop a structured pruning algorithm that is provable, practical, and efficient.
This algorithm, called \textbf{i}terative, \textbf{Spa}rse \textbf{S}tructured \textbf{P}runing (i-SpaSP), iteratively estimates the most important parameter groups\footnote{``Parameter groups'' refers to the minimum structure used for pruning (e.g., neurons or filters).} within each network layer, then thresholds this set of parameter groups based on a pre-defined pruning ratio---a similar procedure to the CoSAMP algorithm for sparse signal recovery \citep{cosamp}. 
Theoretically, we show for two and multi-layer networks that $i)$ the output residual between pruned and dense networks decays polynomially with respect to the size of the pruned network and $ii)$ the order of this polynomial increases as the dense network's hidden representations become more sparse.\footnote{To the best of the authors' knowledge, we are the first to provide theoretical analysis showing that the quality of pruning depends upon the sparsity of representations within the dense network.}
In experiments, we show that i-SpaSP is capable of discovering high-performing sub-networks across numerous different models (i.e., two-layer networks, ResNet34, MobileNetV2, and BERT) and datasets (i.e., MNIST, ImageNet, and XNLI). 
i-SpaSP is simple to implement and significantly improves upon the runtime of GFS variants.

\section{Preliminaries} \label{S:prelim}
\noindent
\textbf{Notation.}
Vectors and scalars are denoted with lower-case letters.
Matrices and certain constants are denoted with upper-case letters.
Sets are denoted with upper-case, calligraphic letters (e.g., $\mathcal{G}$) with set complements $\mathcal{G}^c$.
We denote $[n] = \{0, 1, \dots, n\}$.
For $x \in \mathbb{R}^N$, $\|x\|_p$ is the $\ell_p$ vector norm.
$x_s$ is the $s$ largest-valued components of $x$, where $|x| \geq s$.
$\supp(x)$ returns the support of $x$.
For index set $\mathcal{G}$, $x|_{\mathcal{G}}$ is the vector with non-zeros at the indices in $\mathcal{G}$.
For $X \in \mathbb{R}^{m \times n}$, $\|X\|_F$ and $X^\top$ represent the Frobenius norm and transpose of $X$. 
The $i$-th row and $j$-th column of $X$ are given by $X_{i, :}$ and $X_{:, j}$, respectively.
$\rsupp(X)$ returns the row support of $X$ (i.e., indices of non-zero rows).
For index set $\mathcal{G}$, $X_{\mathcal{G}, :}$ and $X_{:, \mathcal{G}}$ represent row and column sub-matrices, respectively, that contain rows or columns with indices in $\mathcal{G}$.
$\mu(X): \mathbb{R}^{m \times n} \longrightarrow \mathbb{R}^m$ sums over columns of a matrix (i.e., $\mu(X) = \sum_{i} X_{:, i}$), while $\texttt{vec}(X): \mathbb{R}^{m \times n} \longrightarrow \mathbb{R}^{mn}$ stacks columns of a matrix.
$X \in \mathbb{R}^{m \times n}$ is $p$-row-compressible with magnitude $R \in \mathbb{R}^{\geq 0}$ if $|\mu(X)|_{(i)} \leq \frac{R}{i^{\frac{1}{p}}},~~\forall i \in [m],$ where $|\cdot|_{(i)}$ denotes the $i$-th sorted vector component (in magnitude).
Lower $p$ values indicate a nearly row-sparse matrix and vice versa.

\medskip
\noindent
\textbf{Network Architecture.}
Our analysis primarily considers two-layer, feed forward networks\footnote{Though \eqref{eq:2layer_nn} has no bias, a bias term could be implicitly added as an extra element within the input and weight matrices.}:
\begin{align}
    f(X, \mathcal{W}) = W^{(1)}\cdot \sigma(W^{(0)} \cdot X) \label{eq:2layer_nn}
\end{align}
The network's input, hidden, and output dimensions are given by $d_{in}$, $d_{hid}$, and $d_{out}$.
$X \in \mathbb{R}^{d_{in} \times B}$ stores the full input dataset with $B$ examples.
$W^{(0)} \in \mathbb{R}^{d_{hid} \times d_{in}}$ and $W^{(1)} \in \mathbb{R}^{d_{out} \times d_{hid}}$ denote the network's weight matrices.
$\sigma(\cdot)$ denotes the ReLU activation function and $H = \sigma(W^{0}\cdot X)$ stores the network hidden representations across the dataset.
We also extend our analysis to multi-layer networks with similar structure; see Appendix \ref{A:multi_layer} for more details.


\section{Related Work} \label{S:related_work}
\noindent \textbf{Pruning.}
Neural network pruning strategies can be roughly separated into structured \citep{eie, conv_filt_prune, net_slimming, provable_subnetworks, log_pruning} and unstructured \citep{rigl, how_lth_wins, lth, deep_comp} variants.
Structured pruning, as considered in this work, prunes parameter groups instead of individual weights, allowing speedups to be achieved without sparse computation \citep{conv_filt_prune}.
Empirical heuristics for structured pruning include removing parameter groups with low $\ell_1$ norm \citep{conv_filt_prune, net_slimming}, measuring the gradient-based sensitivity of parameter groups \citep{sipping_nn, preserve_grad_flow, discrim_prune}, preserving network output \citep{he_pruning, thinet, nisp}, and more \citep{suau2020filter, chin2019legr, huang2018data, molchanov2016pruning}.
Pruning typically follows a three-step process of pre-training, pruning, and fine-tuning \citep{conv_filt_prune, rethink_pruning}, where pre-training is typically the most expensive component \citep{early_bert, early_bird}.

\medskip
\noindent \textbf{Provable Pruning.}
Empirical pruning research inspired the development of theoretical foundations for network pruning, including sensitivity-based analysis \citep{sipping_nn, liebenwein2019provable}, coreset methodologies \citep{mussay2019data, baykal2018data}, random network pruning analysis \citep{pruning_is_all_you_need, log_pruning_is_all_you_need, subset_lth, whats_hidden}, and generalization analysis \citep{zhang2021lottery}.
GFS---analyzed for two-layer \citep{provable_subnetworks, wolfe2021provably} and multi-layer \citep{log_pruning} networks---was one of the first pruning methodologies to provide both strong empirical performance and theoretical guarantees. 

\medskip
\noindent \textbf{Greedy Selection.} 
Greedy selection efficiently discovers approximate solutions to combinatorial optimization problems \citep{frank1956algorithm}.
Many algorithms and frameworks for greedy selection exist; e.g., Frank-Wolfe \citep{frank1956algorithm}, sub-modular optimization \citep{submod_opt}, CoSAMP \citep{cosamp}, and iterative hard thresholding \citep{khanna2018iht}.
Frank-Wolfe has been used within GFS and to train deep neural networks \citep{bach_fw, frankwolfe_nn}, thus forming a connection between greedy selection and deep learning.
Furthering this connection, we leverage CoSAMP \citep{cosamp} to formulate our proposed methodology.

\section{Methodology} \label{S:method}

i-SpaSP is formulated for two-layer networks in Algorithm \ref{A:cosamp_prune}, where the pruned model size $s$ and total iterations $T$ are fixed.
$\mathcal{S}$ stores active neurons in the pruned model, which is refined over iterations. 

\begin{wrapfigure}{L}{0.49\textwidth}
\vspace{-0.3cm}
\begin{minipage}{0.49\textwidth}
\begin{algorithm}[H]
\SetAlgoLined
\textbf{Parameters:}~$T$,~$s$; $\mathcal{S}$ := $\emptyset$; $t$ := 0 
\vspace{0.1cm} \\
\textcolor{myblue2}{\# compute hidden representation} \\
$H = \sigma(W^{(0)} \cdot X)$\\
$h = \mu(H)$\\\vspace{-.8cm}\\

\textcolor{myblue2}{\# compute dense network output} \\
$U = W^{(1)} \cdot H$ \\
$V = U$\\\vspace{-.3cm}\\
\While{t < T}{
$t = t + 1$\\\vspace{-.3cm}\\
\textcolor{myblue2}{\# \textbf{Step I:} Estimating Importance} \\
$Y = (W^{(1)})^\top \cdot V$\\
$y = \mu(Y)$\\
$\Omega = \supp(y_{2s})$ \\\vspace{-.3cm}\\
\textcolor{myblue2}{\# \textbf{Step II:} Merging and Pruning} \\
$\Omega^\star = \Omega \cup \mathcal{S}$ \\
$b = h|_{\Omega^\star}$ \\
$\mathcal{S} = \supp(b_s)$\\\vspace{-.3cm}\\
\textcolor{myblue2}{\# \textbf{Step III:} Computing New Residual} \\
$V = U - W^{(1)}_{:, \mathcal{S}} \cdot H_\mathcal{S, :}$
}
\textcolor{myblue2}{\# return pruned model with neurons in $\mathcal{S}$}\\
\texttt{return} $\{W^{(0)}_{\mathcal{S}, :}, W^{(1)}_{:, \mathcal{S}}\}$
\caption{i-SpaSP for Two-Layer Networks}
\label{A:cosamp_prune}
\end{algorithm}
\end{minipage}
\vspace{-1.4cm}
\end{wrapfigure}

\medskip
\noindent \textbf{Why does this work?}
Each iteration of i-SpaSP follows a three-step procedure in Algorithm \ref{A:cosamp_prune}:
\begin{itemize}[leftmargin=1.6cm]
 \item [\textbf{Step I:}] Compute neuron ``importance'' $Y$ given the current residual matrix $V$. \vspace{-0.2cm}
\end{itemize}

\vspace{-.3cm}
\begin{itemize}[leftmargin=1.65cm]
 \item[\textbf{Step II:}] Identify $s$ neurons within the combined set of important and active neurons (i.e., $\Omega \cup \mathcal{S}$) with the largest-valued hidden representations.
\end{itemize}

\vspace{-.6cm}
\begin{itemize}[leftmargin=1.7cm]
 \item[\textbf{Step III:}] Update $V$ with respect to the new pruned model estimate.
\end{itemize}

\noindent
We now provide intuition regarding the purpose of each individual step within i-SpaSP.

\medskip
\noindent
\textbf{Estimating Importance.}
$Y_{ij}$ is the importance of hidden neuron $i$ with respect to dataset example $j$.
We can characterize the discrepancy between pruned and dense network output $U'$ and $U$ as:
\begin{align}
    \mathcal{L}(U, U') = \frac{1}{2}\|W^{(1)}\cdot H - U' \|_F^2. \label{eq:prune_obj}
\end{align}
Considering $U'$ fixed, $\nabla_H \mathcal{L}(U, U') = (W^{(1)})^\top \cdot V$.
As such, if $Y_{ij}$ is a large, positive (negative) value, decreasing (increasing) $H_{ij}$ will decrease the value of $\mathcal{L}$ locally.
Then, because $H$ is non-negative and cannot be modified via pruning, one can realize that the best methodology of minimizing \eqref{eq:prune_obj} is including neurons with large, positive importance values within $\mathcal{S}$, as in Algorithm \ref{A:cosamp_prune}.

\medskip
\noindent
\textbf{Merging and Pruning.}
The $2s$ most-important neurons---based on $\mu(Y)$ components---are selected and merged with $\mathcal{S}$, allowing a larger set of neurons (i.e., more than $s$) to be explored.
From here, $s$ neurons with the largest-valued components in $\mu(H)$ are sub-selected from this combined set to form the next pruned model estimate.
Because hidden representation values are not considered in importance estimation, performing this two-step merging and pruning process ensures neurons within $\mathcal{S}$ have both large hidden activation and importance values, which together indicate a meaningful impact on network output.

\medskip
\noindent
\textbf{Computing the New Residual.}
The next pruned model estimate is used to re-compute $V$, which can be intuitively seen as updating $U'$ in \eqref{eq:prune_obj}.
As such, importance in Algorithm \ref{A:cosamp_prune} is based on the current pruned and dense network residual and \eqref{eq:prune_obj} is minimized over successive iterations.

\subsection{Implementation} \label{S:implementation}

\medskip
\noindent \textbf{Automatic Differentiation.}
Because $Y = \nabla_H \mathcal{L}(U, U')$, importance within i-SpaSP can be computed efficiently using automatic differentiation \citep{paszke2017automatic, tensorflow}; see Algorithm \ref{A:autodiff} for a PyTorch-style example.
Automatic differentiation simplifies importance estimation and allows it to be run on a GPU, making the implementation efficient and parallelizable.
Because the remainder of the pruning process only leverages basic sorting and set operations, the overall implementation of i-SpaSP is both simple and efficient.

\medskip
\noindent \textbf{Other Architectures.} 
Algorithm \ref{A:autodiff} can be easily generalized to more complex network modules (i.e., beyond feed-forward layers) using automatic differentiation.
Notably, convolutional filters or attention heads can be pruned using the importance estimation from Algorithm \ref{A:autodiff} if $\texttt{sum}(\cdot)$ is performed over both batch and spatial dimensions. 
Furthermore, i-SpaSP can be used to prune multi-layer networks by greedily pruning each layer of the network from beginning to end.

\medskip
\noindent \textbf{Large-Scale Datasets.}
Algorithm \ref{A:cosamp_prune} assumes the entire dataset is stored within $X$.
For large-scale experiments, such an approach is not tractable.
As such, we redefine $X$ within experiements to contain a subset or mini-batch of data from the full dataset, allowing the pruning process to be performing in an approximate---but computationally tractable---manner.
To improve the quality of this approximation, a new mini-batch is sampled during each i-SpaSP iteration, but the size of such mini-batches becomes a hyperparameter of the pruning process.\footnote{Both GFS \citep{provable_subnetworks} and multi-layer GFS \citep{log_pruning} adopt a similar mini-batch approach during pruning.}

\begin{minipage}{0.5\textwidth}
\begin{tcolorbox}
    \begin{footnotesize}
    $H$\texttt{.requires\_grad} $:=$ \texttt{True} \\
    \texttt{with torch.no\_grad():} \\
    \text{~~~~~~~~}\texttt{prune\_out} $:=$ \texttt{prune\_layer}$(H_{\mathcal{S}, :})$ \\
    \texttt{dense\_out} $:=$ \texttt{dense\_layer}$(H)$ \\
    \texttt{obj} $:=$ \texttt{sum} $\left(\frac{1}{2}(\texttt{dense\_out} - \texttt{prune\_out})^2\right)$\\
    \texttt{obj.backward()}\\
    \texttt{importance} $:= \texttt{sum}(H\texttt{.grad}, \texttt{dim}=0)$\\
    \texttt{return~importance}
    \end{footnotesize}
\end{tcolorbox}
\captionof{algorithm}{i-SpaSP importance computation via automatic differentiation.}
\label{A:autodiff}
\end{minipage}\hspace{0.5cm}
\begin{minipage}{0.45\textwidth}
    \centering
    \includegraphics[width=0.95\textwidth]{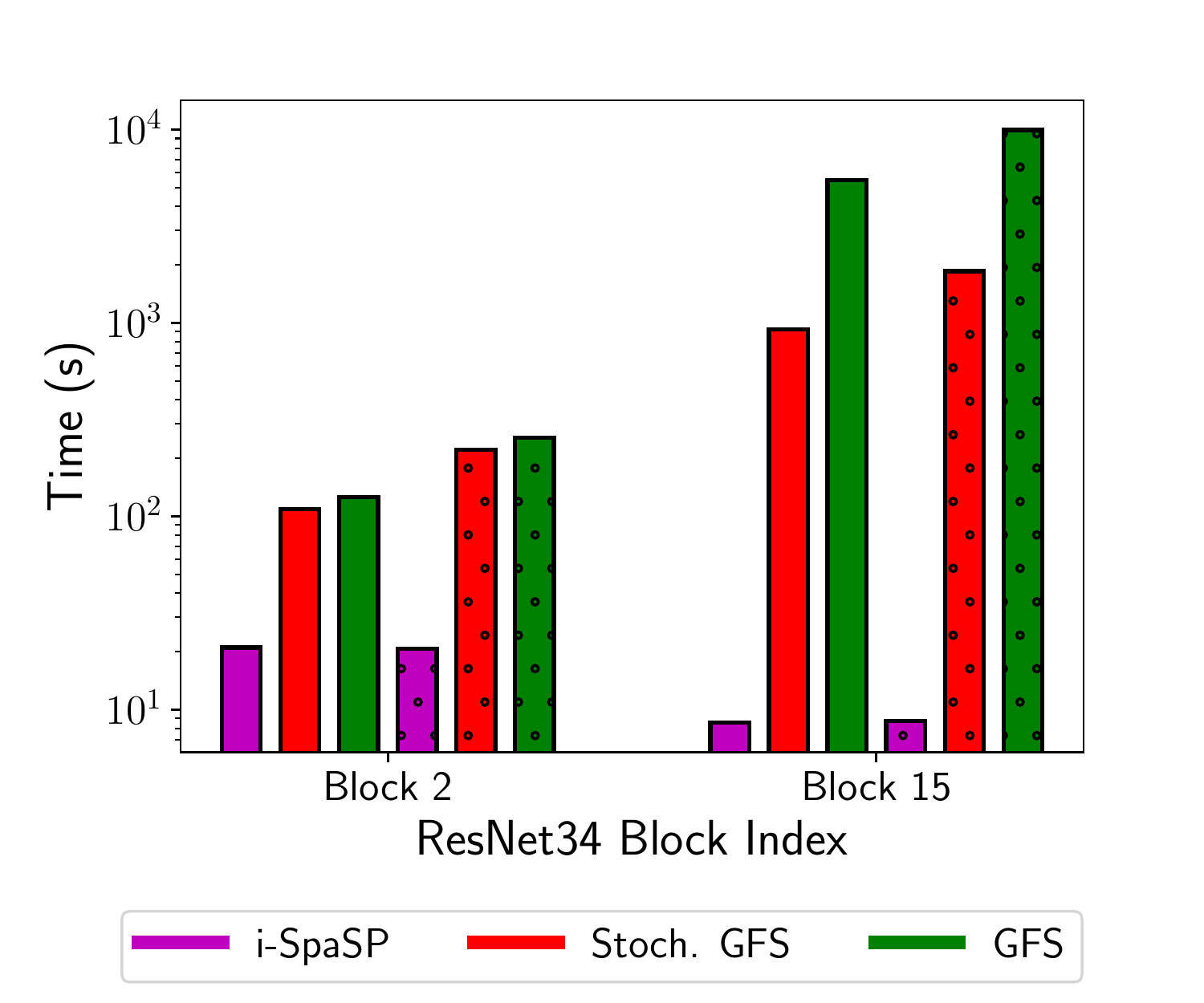}
    \vspace{-0.4cm}
    \captionof{figure}{Runtime of pruning ResNet34 blocks with i-SpaSP and GFS variants to 20\% (i.e., plain) or 40\% (i.e., dotted) of filters.}
    \label{fig:cs_time}
\end{minipage}

\subsection{Computational Complexity and Runtime Comparisons} \label{S:comp_complex}
Denote the complexity of matrix-matrix multiplication as $\xi$.
The complexity of pruning a network layer with $T$ iterations of i-SpaSP is $\mathcal{O}(T\xi + T d_{hid}\log(d_{hid}))$.
GFS has a complexity of $\mathcal{O}(s \xi d_{hid})$, as it adds a single neuron to the (initially empty) network layer at each iteration by exhaustively searching for the neuron that minimizes training loss.
Though later GFS variants achieve complexity of $\mathcal{O}(s \xi)$ \citep{log_pruning}, i-SpaSP is more efficient in practice because the forward pass (i.e. $\mathcal{O}(\xi)$) dominates the pruning procedure and $T \ll s$ (e.g., $T=20$ in Section \ref{S:experiments}).

As a practical runtime comparison, we adopt a ResNet34 model \citep{resnet} and measure wall-clock pruning time\footnote{We prune the 2nd (64 channels) and 15th (512 channels) convolutional blocks. We choose blocks in different network regions to view the impact of channel and spatial dimension on pruning efficiency.} with i-SpaSP and GFS. 
We use the public implementation of GFS \citep{gfs_github} and test both stochastic and vanilla variants\footnote{Vanilla GFS exhaustively searches neurons within each iteration, while the stochastic variant randomly selects 50 neurons to search per iteration.}.
i-SpaSP uses settings from Section \ref{S:experiments}, and selected ResNet blocks are pruned to ratios of 20\% or 40\% of original filters; see Figure \ref{fig:cs_time}.
i-SpaSP significantly improves upon the runtime of GFS variants; e.g., i-SpaSP prunes Block 15 in roughly 10 seconds, while GFS takes over 1000 seconds in the best case. 
Furthermore, unlike GFS, the runtime of i-SpaSP is not sensitive to the size of the pruned network (i.e., wall-clock time is similar for ratios of 20\% and 40\%), though i-SpaSP does prune later network layers faster than earlier layers.

\section{Theoretical Results} \label{S:theory}
Proofs are deferred to Appendix \ref{A:proofs}.
The dense network is pruned from $d_{hid}$ to $s$ neurons via i-SpaSP.
We assume that $W^{(1)}$ satisfies the restricted isometry property (RIP) \citep{rip}:

\begin{assumption}[Restricted Isometry Property (RIP) \citep{rip}] \label{assm:rip}
    Denote the $r$-th restricted isometry constant as $\delta_r$ and assume $\delta_r \leq 0.1$ for $r = 4s$.\footnote{This is a numerical assumption adopted from \citep{cosamp}, which holds for Gaussian matrices of size $\mathbb{R}^{m \times n}$ when $m \geq \mathcal{O}\left(r\log(\frac{n}{r})\right)$.} Then, $W^{(1)}$ satisfies the RIP with constant $\delta_r$ if  $ (1 - \delta_r)\|x\|_2^2 \leq \|W^{(1)} \cdot x\|_2^2 \leq (1 + \delta_r) \|x\|_2^2~$ for all $\|x\|_0 \leq r$.
\end{assumption}

No assumption is made upon $W^{(0)}$.
We hypothesize that Assumption \ref{assm:rip} is mild due to properties like semi or quarter-circle laws that bound the eigenvalues symmetric, random matrices within some range \citep{edelman2013random}, but we leave the formal verification of this assumption as future work.
We define $H = \sigma(W^{(0)}\cdot X) \in \mathbb{R}^{d_{hid} \times B}$, which can be theoretically reformulated as $H = Z + E$ for $s$-row-sparse $Z$ and arbitrary $E$.
From here, we can show the following about the residual between pruned and dense network hidden representations after $t$ iterations of Algorithm \ref{A:cosamp_prune}.

\begin{lemma} \label{L:hidden_res}
If Assumption \ref{assm:rip} holds, the pruned approximation to $H$ after $t$ iterations of Algorithm \ref{A:cosamp_prune}, $H_{\mathcal{S}_t, :}$, is $s$-row-sparse and satisfies the following inequality:
\begin{align*}
    \|\mu(H - H_{\mathcal{S}_t, :})\|_2 \leq (0.444)^t \|\mu(H)\|_2 + \left(14 + \frac{7}{\sqrt{s}}\right)\|\mu(E)\|_1
\end{align*}
\end{lemma}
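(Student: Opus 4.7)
The plan is to reduce the matrix-valued guarantee to a noisy sparse-recovery problem on a single vector in $\mathbb{R}^{d_{hid}}$ and then apply a CoSAMP-style contraction argument \citep{cosamp}. By linearity of $\mu(\cdot)$, write $h := \mu(H) = z + e$, where $z := \mu(Z)$ is $s$-sparse with support equal to the row-support $\mathcal{T}$ of $Z$, and $e := \mu(E)$ is arbitrary. Under the natural convention that $H_{\mathcal{S}_t, :}$ is re-embedded into $\mathbb{R}^{d_{hid} \times B}$ by zero-padding rows outside $\mathcal{S}_t$, we have $\mu(H - H_{\mathcal{S}_t, :}) = h|_{\mathcal{S}_t^c}$ and the algorithm's residual satisfies $\mu(V_t) = W^{(1)} \cdot h|_{\mathcal{S}_t^c}$. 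The importance vector in Step~I thus becomes $y = (W^{(1)})^\top W^{(1)} \cdot h|_{\mathcal{S}_t^c}$, which is exactly the CoSAMP proxy for recovering the $s$-sparse $z$ from the ``measurements'' $W^{(1)} h = W^{(1)} z + W^{(1)} e$. The target $\|h|_{\mathcal{S}_t^c}\|_2$ then has a clean sparse-recovery interpretation.

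The core of the proof is a single-step contraction $\|h|_{\mathcal{S}_{t+1}^c}\|_2 \leq \rho\,\|h|_{\mathcal{S}_t^c}\|_2 + C\,\|e\|_1$ with $\rho < 1$, which I would establish in two standard sub-steps. First, an \emph{identification bound}: Assumption~\ref{assm:rip} and its consequence that $\|((W^{(1)})^\top W^{(1)} - I)\,v\|_2 \leq \delta_{4s}\,\|v\|_2$, valid whenever the supports involved are jointly at most $4s$-sparse, imply that $\Omega = \supp(y_{2s})$ retains the large components of $z$ missing from $\mathcal{S}_t$, giving $\|z|_{(\mathcal{S}_t \cup \Omega)^c}\|_2 \leq c_1\,\delta_{4s}\,\|h|_{\mathcal{S}_t^c}\|_2 + c_2\,\|e\|_2$ for explicit $c_1, c_2$. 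Second, a \emph{pruning bound}: since $\mathcal{S}_{t+1} = \supp((h|_{\Omega^\star})_s)$ with $\Omega^\star = \Omega \cup \mathcal{S}_t$ is the best $s$-term approximation of $h$ inside $\Omega^\star$, and $z$ is itself $s$-sparse, a triangle-inequality argument around the best-$s$-term projection yields $\|h|_{\mathcal{S}_{t+1}^c}\|_2 \leq 2\,\|z|_{(\Omega^\star)^c}\|_2 + 2\,\|e|_{(\Omega^\star)^c}\|_2$. Chaining the two bounds and using the standard $\ell_1$--$\ell_2$ conversion $\|e|_{\mathcal{A}}\|_2 \leq \|e\|_1/\sqrt{s}$ on at-most-$s$-sparse index sets produces the recurrence, with $\rho = 0.444$ appearing once $\delta_{4s} \leq 0.1$ is substituted.

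Unrolling the one-step recurrence gives $\|h|_{\mathcal{S}_t^c}\|_2 \leq \rho^t\,\|h\|_2 + \frac{C}{1-\rho}\,\|e\|_1$, and careful tracking of constants through both sub-steps collapses $C/(1-\rho)$ to exactly $14 + 7/\sqrt{s}$; re-inserting $h = \mu(H)$ and $e = \mu(E)$ recovers the stated lemma. Note that i-SpaSP's Step~II is simpler than CoSAMP's estimation sub-problem: it sets $b = h|_{\Omega^\star}$ directly, rather than solving a least-squares system, because here the ``signal'' $h$ is directly accessible and not merely observed through $W^{(1)}$. This removes one source of error from the standard CoSAMP bookkeeping.

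The hard part will be the identification bound together with the exact numerical constants. Obtaining the sharp contraction factor $0.444$ (rather than a looser constant like $\tfrac{1}{2}$) requires the tightest available CoSAMP-style proxy inequality and a careful decomposition of $y$ into a signal contribution from $z$ and a perturbation from $e$. Every invocation of Assumption~\ref{assm:rip} and every $\ell_1$--$\ell_2$ conversion on a sparse restriction of $e$ contributes an additive term, and these must telescope precisely into the final $(14 + 7/\sqrt{s})$ constant --- routine but error-prone accounting that is the real effort of the proof.
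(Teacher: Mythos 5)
Your proposal takes essentially the same route as the paper's proof: reduce everything to the vector $h=\mu(H)=\mu(Z)+\mu(E)$ using linearity of $\mu(\cdot)$, interpret each i-SpaSP iteration as CoSAMP's identification--merge--prune steps on $h$ with sensing matrix $W^{(1)}$ (with no least-squares estimation step), establish a one-step contraction with factor $0.444$ under $\delta_{4s}\leq 0.1$ via an identification bound plus the factor-of-two pruning bound, and unroll the recursion to collect the $\left(14+\frac{7}{\sqrt{s}}\right)\|\mu(E)\|_1$ term. One small caution on the bookkeeping you defer: the conversion $\|e|_{\mathcal{A}}\|_2\leq \|e\|_1/\sqrt{s}$ is false for a general restriction to an $s$-sparse index set, and the paper instead handles the non-sparse error through the bound $\|W^{(1)}v\|_2\leq\sqrt{1+\delta_{2s}}\left(\|v\|_2+\tfrac{1}{\sqrt{2s}}\|v\|_1\right)$ (Proposition 3.5 of \citep{cosamp}) applied to $v=\mu(\hat{E})$, only invoking $\|\mu(E)\|_2\leq\|\mu(E)\|_1$ at the very end to merge the two error terms.
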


\noindent
Going further, Lemma \ref{L:hidden_res} can be used to bound the residual between pruned and dense network output. 

\begin{theorem} \label{T:vfrob}
    Let $U = W^{(1)} \cdot H$ and $U' = W_{:, \mathcal{S}_t}^{(1)} \cdot H_{\mathcal{S}_t, :}$ denote pruned and dense network output, respectively. $V_t = U - U'$ stores the residual between pruned and dense network output over the entire dataset. If Assumption \ref{assm:rip} holds, we have the following at iteration $t$ of Algorithm \ref{A:cosamp_prune}:
    \begin{align*}
        \|V_t\|_F \leq \|W^{(1)}\|_F \cdot \left ( (0.444)^t \|\mu(H)\|_2 + \left ( 14 + \frac{7}{\sqrt{s}}\right)\|\mu(E)\|_1 \right)
    \end{align*}
\end{theorem}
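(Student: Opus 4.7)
The plan is to reduce the Frobenius norm bound on $V_t$ to the vector bound on $\|\mu(H - H_{\mathcal{S}_t,:})\|_2$ supplied by Lemma \ref{L:hidden_res}, using two elementary steps: a submultiplicativity argument and a non-negativity trick that exploits the ReLU structure.

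First, I would rewrite the pruned output as a single matrix product. Observing that $W^{(1)}_{:, \mathcal{S}_t} \cdot H_{\mathcal{S}_t,:} = W^{(1)} \cdot \hat{H}$, where $\hat{H} \in \mathbb{R}^{d_{hid} \times B}$ equals $H$ on rows indexed by $\mathcal{S}_t$ and is zero elsewhere, let $\tilde{D} := H - \hat{H}$. This $\tilde{D}$ is exactly the (zero-padded) object denoted $H - H_{\mathcal{S}_t,:}$ in Lemma \ref{L:hidden_res}. Then $V_t = W^{(1)} \tilde{D}$, and applying the standard Frobenius submultiplicativity $\|AB\|_F \leq \|A\|_F \|B\|_F$ (which follows from Cauchy--Schwarz applied entrywise) gives $\|V_t\|_F \leq \|W^{(1)}\|_F \cdot \|\tilde{D}\|_F$.

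The key remaining step, and the one I expect to be the main subtle point, is bounding $\|\tilde{D}\|_F$ by the quantity $\|\mu(\tilde{D})\|_2$ that the lemma actually controls. This inequality is false for arbitrary matrices, so the argument must genuinely use the ReLU structure: since $H = \sigma(W^{(0)} X) \geq 0$ entrywise and $\tilde{D}$ is obtained from $H$ by zeroing rows, we have $\tilde{D} \geq 0$ entrywise. Expanding, for each row $i$,
\[
\Bigl(\sum_j \tilde{D}_{ij}\Bigr)^{2} = \sum_j \tilde{D}_{ij}^{2} + 2 \sum_{j<k} \tilde{D}_{ij}\,\tilde{D}_{ik} \;\geq\; \sum_j \tilde{D}_{ij}^{2},
\]
because non-negativity makes every cross term non-negative. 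Summing over $i$ yields $\|\mu(\tilde{D})\|_2^{2} \geq \|\tilde{D}\|_F^{2}$, i.e.\ $\|\tilde{D}\|_F \leq \|\mu(\tilde{D})\|_2$.

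Chaining these inequalities gives $\|V_t\|_F \leq \|W^{(1)}\|_F \cdot \|\mu(\tilde{D})\|_2$, and plugging in the bound on $\|\mu(H - H_{\mathcal{S}_t,:})\|_2$ from Lemma \ref{L:hidden_res} produces the claimed inequality verbatim. No additional use of Assumption \ref{assm:rip} beyond what Lemma \ref{L:hidden_res} already invoked is needed at this stage.
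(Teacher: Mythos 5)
Your proposal is correct and matches the paper's proof in substance: the paper likewise writes $V_t = W^{(1)}\cdot(H - H_{\mathcal{S}_t,:})$, applies Cauchy--Schwarz row-by-row (equivalent to your Frobenius submultiplicativity step), and uses the entrywise non-negativity of $H - H_{\mathcal{S}_t,:}$ to bound its Frobenius norm by $\|\mu(H - H_{\mathcal{S}_t,:})\|_2$ before invoking Lemma \ref{L:hidden_res}. You correctly identified the non-negativity trick as the only place the ReLU structure enters, exactly as in the paper.
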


Because $\|\mu(H)\|_2$ decays linearly in Theorem \ref{T:vfrob}, $\|\mu(E)\|_1$ dominates the above expression for large $t$.
By assuming $H$ is row-compressible, we can derive a bound on $\|\mu(E)\|_1$.

\begin{lemma} \label{L:noise_bound}
Assume $H$ is $p$-row-compressible with magnitude $R$, where $H = Z + E$ for $s$-row-sparse $Z$ and arbitrary $E$.
Then, 
$\|\mu(E)\|_1 \leq R\cdot \frac{s^{1 - \frac{1}{p}}}{\frac{1}{p} - 1}.$
\end{lemma}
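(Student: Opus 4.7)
The plan is to exploit the natural decomposition $H = Z + E$ where $Z$ contains the $s$ rows of $H$ corresponding to the largest-magnitude entries of $\mu(H)$ (so $Z$ is $s$-row-sparse) and $E$ contains the remaining rows. Since $\mu$ is linear, this gives $\mu(E) = \mu(H) - \mu(Z)$, and the nonzero entries of $\mu(E)$ are exactly the smallest $m - s$ entries of $\mu(H)$ in magnitude. Therefore
\begin{equation*}
  \|\mu(E)\|_1 \;=\; \sum_{i = s+1}^{m} |\mu(H)|_{(i)}.
\end{equation*}

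Next I would invoke the $p$-row-compressibility hypothesis, which gives $|\mu(H)|_{(i)} \leq R \cdot i^{-1/p}$ for every $i$, so that
\begin{equation*}
  \|\mu(E)\|_1 \;\leq\; R \sum_{i = s+1}^{m} i^{-1/p} \;\leq\; R \sum_{i = s+1}^{\infty} i^{-1/p}.
\end{equation*}
Because row-compressibility is informative only in the genuinely compressible regime $p < 1$, i.e.\ $1/p > 1$, the tail series converges and can be upper-bounded by an integral comparison: since $x \mapsto x^{-1/p}$ is decreasing on $[s, \infty)$,
\begin{equation*}
  \sum_{i = s+1}^{\infty} i^{-1/p} \;\leq\; \int_{s}^{\infty} x^{-1/p}\, dx \;=\; \frac{s^{\,1 - 1/p}}{\tfrac{1}{p} - 1}.
\end{equation*}
Combining the two displays yields exactly the claimed bound $\|\mu(E)\|_1 \leq R \cdot s^{1 - 1/p}/(1/p - 1)$.

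This proof is essentially a textbook tail-bound computation, so there is no serious obstacle; the only subtlety worth flagging is ensuring the decomposition $H = Z + E$ used in the statement is in fact the optimal one (best $s$-row approximation with respect to the $\mu(\cdot)$ ordering), which is consistent with how $Z$ and $E$ arise in the proof of Lemma \ref{L:hidden_res}. One should also note the implicit requirement $p < 1$ for the bound to be finite, and (if desired) a version valid for $p \geq 1$ that would instead produce a $\log$ factor; the statement's denominator $\tfrac{1}{p}-1$ makes clear that the intended regime is $p < 1$, matching the compressible-signals setting from sparse recovery.
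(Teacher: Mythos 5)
Your proposal is correct and follows essentially the same route as the paper: both take $Z$ to be the rows of $H$ indexed by the $s$ largest-magnitude entries of $\mu(H)$, so that $\|\mu(E)\|_1$ is the $\ell_1$ tail of the $p$-compressible vector $\mu(H)$. The only difference is that you carry out the tail-sum/integral comparison explicitly, whereas the paper cites the standard compressible-vector bound from Section 2.6 of the CoSaMP paper; your remark that the bound presumes the regime $p<1$ is a fair observation consistent with the paper's intended setting.
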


\noindent
Lemma \ref{L:noise_bound} can then be combined with Theorem \ref{T:vfrob} to bound the error due to pruning via i-SpaSP. 

\begin{theorem} \label{T:final_bound}
    Assume Algorithm \ref{A:cosamp_prune} is run for a sufficiently large number of iterations $t$.
    If Assumption \ref{assm:rip} holds and $H$ is $p$-row-compressible with factor $R$, the output residual between the dense network and the pruned network discovered via i-SpaSP can be bounded as follows:
    \begin{align*}
        \|V_t\|_F \leq \mathcal{O}\left ( \frac{s^{\frac{1}{2} - \frac{1}{p}} p (2\sqrt{s} + 1)}{1 - p} \right).
    \end{align*}
    \begin{proof}
    This follows directly from substituting Lemma \ref{L:noise_bound} into Theorem \ref{T:vfrob}, assuming $t$ is large enough such that $(0.444)^t \|W^{(1)}\|_F \|\mu(H)\|_2 \approx 0$, and factoring out constants in the resulting expression.
    \end{proof}
\end{theorem}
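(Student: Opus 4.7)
The plan is to obtain the stated bound by a direct composition of the two preceding results, Theorem \ref{T:vfrob} and Lemma \ref{L:noise_bound}, together with an asymptotic-in-$t$ simplification. Theorem \ref{T:vfrob} already decomposes $\|V_t\|_F$ into two additive contributions: a geometric term $\|W^{(1)}\|_F \cdot (0.444)^t \|\mu(H)\|_2$ that decays exponentially in $t$, and a noise-like term proportional to $\|\mu(E)\|_1$ that does not depend on $t$. My first move is to choose $t$ large enough that the geometric term is dominated by the noise term, so it can be absorbed into the $\mathcal{O}(\cdot)$ notation and dropped. This is where the ``sufficiently large $t$'' hypothesis in the statement does its work.

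Next, I would substitute the row-compressibility estimate $\|\mu(E)\|_1 \leq R \cdot s^{1 - 1/p}/(1/p - 1)$ from Lemma \ref{L:noise_bound} into the surviving noise term. This yields an upper bound of the form
\begin{align*}
\|V_t\|_F \;\leq\; \|W^{(1)}\|_F \cdot R \cdot \Bigl(14 + \tfrac{7}{\sqrt{s}}\Bigr) \cdot \frac{s^{1 - 1/p}}{1/p - 1},
\end{align*}
in which $\|W^{(1)}\|_F$, $R$, and the numerical constant $7$ can all be folded into the big-$\mathcal{O}$. What remains is a pure algebraic rearrangement to reach the target form.

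The rearrangement rests on two elementary identities. First, $14 + 7/\sqrt{s} = (7/\sqrt{s})(2\sqrt{s} + 1)$, so that multiplying by $s^{1 - 1/p}$ produces the factor $s^{1/2 - 1/p}(2\sqrt{s} + 1)$ that appears in the theorem. Second, $1/(1/p - 1) = p/(1 - p)$, obtained by multiplying numerator and denominator by $p$; this converts the denominator $1/p - 1$ into $(1-p)/p$, which matches the $p/(1-p)$ factor in the stated bound. Assembling these pieces yields exactly $\mathcal{O}\bigl(s^{1/2 - 1/p} p (2\sqrt{s}+1)/(1-p)\bigr)$.

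There is no serious obstacle here; the whole argument is a bookkeeping step layered on top of Theorem \ref{T:vfrob} and Lemma \ref{L:noise_bound}. The only subtleties worth flagging are (i) making the ``$(0.444)^t \approx 0$'' step precise by picking $t$ on the order of $\log(1/\epsilon)/\log(1/0.444)$ for a chosen tolerance $\epsilon$, and (ii) ensuring the compressibility exponent satisfies $p < 1$ so that $1/p - 1 > 0$ and the direction of the inequality, along with the sign of the denominator $1-p$ in the final expression, is preserved.
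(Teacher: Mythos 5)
Your proposal is correct and follows essentially the same route as the paper's own proof: substitute Lemma \ref{L:noise_bound} into Theorem \ref{T:vfrob}, take $t$ large enough to discard the $(0.444)^t\|W^{(1)}\|_F\|\mu(H)\|_2$ term, and absorb $\|W^{(1)}\|_F$, $R$, and numerical constants into the $\mathcal{O}(\cdot)$. Your explicit algebraic identities $14 + 7/\sqrt{s} = (7/\sqrt{s})(2\sqrt{s}+1)$ and $1/(1/p-1) = p/(1-p)$ simply make the paper's ``factoring out constants'' step concrete.
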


Theorem \ref{T:final_bound} indicates that the quality of the pruned network is dependent upon $s$ and $p$.
Intuitively, one would expect that lower values of $p$ (implying sparser $H$) would make pruning easier, as neurons corresponding to zero rows in $H$ could be eliminated without consequence.
This trend is observed exactly within Theorem \ref{T:final_bound}; e.g., for $p = \left \{\frac{3}{4}, \frac{1}{2}, \frac{1}{4} \right \}$ we have $\|V_t\|_F \leq \{\mathcal{O} (s^{-\frac{1}{3}}), \mathcal{O} (s^{-1} ), \mathcal{O}(s^{-3} )  \}$, respectively.
\textit{To the best of the authors' knowledge, our work is the first to theoretically characterize pruning error with respect to sparsity properties of network hidden representations.}
This bound can also be extended to similarly-structured (see Appendix \ref{A:multi_layer}), multi-layer networks: 

\begin{theorem} \label{T:multi_layer}
Consider an $L$-hidden-layer network with weight matrices $\{W^{(0)}, \dots, W^{(L)} \}$ and hidden representations $\{H^{(1)}, \dots H^{(L)} \}$.
The hidden representations of each layer are assumed to have dimension $d$ for simplicity. 
We define $H^{(\ell)} = \sigma (W^{(\ell)} \cdot H^{(\ell - 1)})$, where $H^{(1)} = \sigma(W^{(0)} \cdot X)$ and $H^{(L)} = W^{(L)} \cdot H^{(L - 1)}$.
We assume all weight matrices other than $W^{(0)}$ obey Assumption \ref{assm:rip} and all hidden representations other than $H^{(L)}$ are $p$-row-compressible.
i-SpaSP is applied greedily to prune each network layer, in layer order, from $d$ to $s$ hidden neurons.
Given sufficient iterations $t$, the residual between pruned and dense multi-layer network output behaves as:
\begin{align}
\|V^{(L)}_t\|_F \leq \mathcal{O} \left ( \sum_{i=1}^L  \textcolor{mygreen}{\left (14 + \frac{7}{\sqrt{s}}\right)^{L - i + 1}}  \textcolor{myblue2}{\left (\|W^{(L)}\|_F   \prod_{j=1}^{L - i} \|\texttt{vec}(W^{(j)})\|_1 \right)} \textcolor{myred}{\left (\frac{ d^{\frac{L-i}{2}} s^{1 - \frac{1}{p}}}{ \frac{1}{p} - 1} \right)} \right ) \label{eq:multi_layer}
\end{align}
\end{theorem}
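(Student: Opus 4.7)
The plan is to obtain (\ref{eq:multi_layer}) by extending Lemma \ref{L:hidden_res} and Theorem \ref{T:vfrob} layer-by-layer: apply the two-layer guarantee at each pruned hidden layer, carry the resulting perturbation forward through the remaining layers, and sum the contributions into a single bound. I would introduce the layerwise residual $\epsilon^{(\ell)} = \|\mu(H^{(\ell)} - \tilde H^{(\ell)})\|_2$ for $\ell = 1,\ldots,L-1$, where $\tilde H^{(\ell)}$ denotes the (effectively $s$-row-sparse) hidden representation produced by the pruned network at layer $\ell$; the ultimate quantity to bound is $\epsilon^{(L)} = \|V^{(L)}_t\|_F$.

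To analyse a single pruning step, I split the error at layer $\ell$ into a ``this-layer'' piece and an ``inherited'' piece by writing $H^{(\ell)} - \tilde H^{(\ell)} = (H^{(\ell)} - \hat H^{(\ell)}) + (\hat H^{(\ell)} - \tilde H^{(\ell)})$, where $\hat H^{(\ell)} = \sigma\bigl(W^{(\ell-1)}_{\mathcal S_\ell, :} H^{(\ell-1)}\bigr)$ is the idealised pruned layer evaluated on the \emph{true} dense input. Lemma \ref{L:hidden_res} applied at layer $\ell$ bounds the first piece, after enough iterations, by $(14 + 7/\sqrt s)\|\mu(E^{(\ell)})\|_1$. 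The second piece is a pure propagation of the inherited error through the pruned weight and the 1-Lipschitz ReLU; using $\mu(W \cdot X) = W \cdot \mu(X)$ together with standard norm conversions to shuttle between the $\|\mu(\cdot)\|_2$ quantity tracked by the recursion and the $\|\mu(\cdot)\|_1$ quantity fed into Lemma \ref{L:hidden_res}, this piece is bounded by at most $\sqrt d \, \|\texttt{vec}(W^{(\ell-1)})\|_1 \, \epsilon^{(\ell-1)}$.

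Combining the two pieces produces a first-order recursion of the form
\begin{align*}
\epsilon^{(\ell)} \leq \bigl(14 + 7/\sqrt s\bigr)\Bigl( \|\mu(E^{(\ell)})\|_1 + \sqrt d \, \|\texttt{vec}(W^{(\ell-1)})\|_1 \, \epsilon^{(\ell-1)} \Bigr),
\end{align*}
and the output layer (no ReLU) contributes an extra $\|W^{(L)}\|_F$ factor via the same Frobenius conversion as in Theorem \ref{T:vfrob}. Unrolling the recursion groups terms by the layer $i$ at which the noise $\|\mu(E^{(i)})\|_1$ was introduced: passage through the remaining pruning stages produces the green factor $(14 + 7/\sqrt s)^{L-i+1}$, the propagation chain produces the blue factor $\|W^{(L)}\|_F \prod_{j=1}^{L-i} \|\texttt{vec}(W^{(j)})\|_1$, and the accumulated $\sqrt d$'s yield the dimensional factor $d^{(L-i)/2}$. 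Substituting Lemma \ref{L:noise_bound} to replace each $\|\mu(E^{(i)})\|_1$ by $R s^{1-1/p}/(1/p - 1)$ and collecting constants recovers (\ref{eq:multi_layer}).

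The main obstacle is establishing a perturbation-robust version of Lemma \ref{L:hidden_res}: the lemma as stated analyses i-SpaSP on the exact dense input, whereas in the multi-layer setting the algorithm operates at layer $\ell$ on the already-pruned input $\tilde H^{(\ell-1)}$. One must verify that the RIP/CoSAMP argument tolerates this exogenous corruption while still delivering the advertised $14 + 7/\sqrt s$ amplification, and that the resulting bound cleanly separates into the ``this-layer'' and ``inherited'' contributions used above. A secondary subtlety is justifying the appearance of $\|\texttt{vec}(W^{(j)})\|_1$ rather than a tighter operator norm: this conservative choice is dictated by the need for a single inequality that propagates $\mu(\cdot)$ between $\ell_1$- and $\ell_2$-type quantities across layers, and the concomitant $\sqrt d$ per layer has to be booked exactly right for the exponent $(L-i)/2$ in the final expression to match.
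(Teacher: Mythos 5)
Your proposal follows essentially the same route as the paper's proof: a layerwise recursion in which the current layer's error is controlled by Lemma \ref{L:hidden_res}/Theorem \ref{T:vfrob}, the inherited error is propagated through the ReLU and weight matrix via $\|\texttt{vec}(W^{(\ell)})\|_1$ with a $\sqrt{d}$ norm conversion between $\ell_1$ and $\ell_2$ forms of $\mu(\cdot)$, and the unrolled sum is closed with Lemma \ref{L:noise_bound}. The obstacle you flag (Lemma \ref{L:hidden_res} being stated for the exact dense input) is resolved in the paper precisely as your structure suggests: since the decomposition $H=Z+E$ allows an \emph{arbitrary} $E$, the inherited pruning error $\Delta^{(\ell)}$ is simply absorbed into the noise term ($\Xi^{(\ell)}=E^{(\ell)}+\Delta^{(\ell)}$) before applying the lemma, and a triangle inequality then separates the ``this-layer'' and ``inherited'' contributions exactly as in your recursion.
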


Pruning error in \eqref{eq:multi_layer} is summed over each network layer.
Considering layer $i$, the green factor is inherited from Theorem \ref{T:vfrob} with an added exponent due to a recursion over network layers after $i$.
Similarly, the blue factor accounts for propagation of error through weight matrices after layer $i$, \emph{revealing that green and blue factors account for propagation of error through network layers}.
The red portion of \eqref{eq:multi_layer}, which captures the convergence properties of multi-layer pruning, comes from Lemma \ref{L:noise_bound}, where an extra factor $d^{\frac{L - i}{2}}$ arises as an artifact of the proof.\footnote{This artifact arises because $\ell_1$ norms must be replaced with $\ell_2$ norms in certain areas to form a recursion over network layers. This artifact can likely be removed by leveraging sparse matrix analysis for expander graphs \citep{bah2018construction}, but we leave this as future work to simplify the analysis.}
Because $d$ is fixed multiple of $s$ determined by the pruning ratio, this factor disappears when $p$ is small, leading the expression to converge.
For example, if $p=\frac{1}{4}$, the red expression in \eqref{eq:multi_layer} behaves asymptotically as $\{\mathcal{O}(s^{-2.5}), \mathcal{O}(s^{-2}), \mathcal{O}(s^{-1.5}), \dots \}$ for layers at the end of the network moving backwards.

\section{Experiments} \label{S:experiments}

\begin{wrapfigure}{r}{0.6\textwidth}
    \vspace{-0.6cm}
    \centering
    \includegraphics[width=1\linewidth]{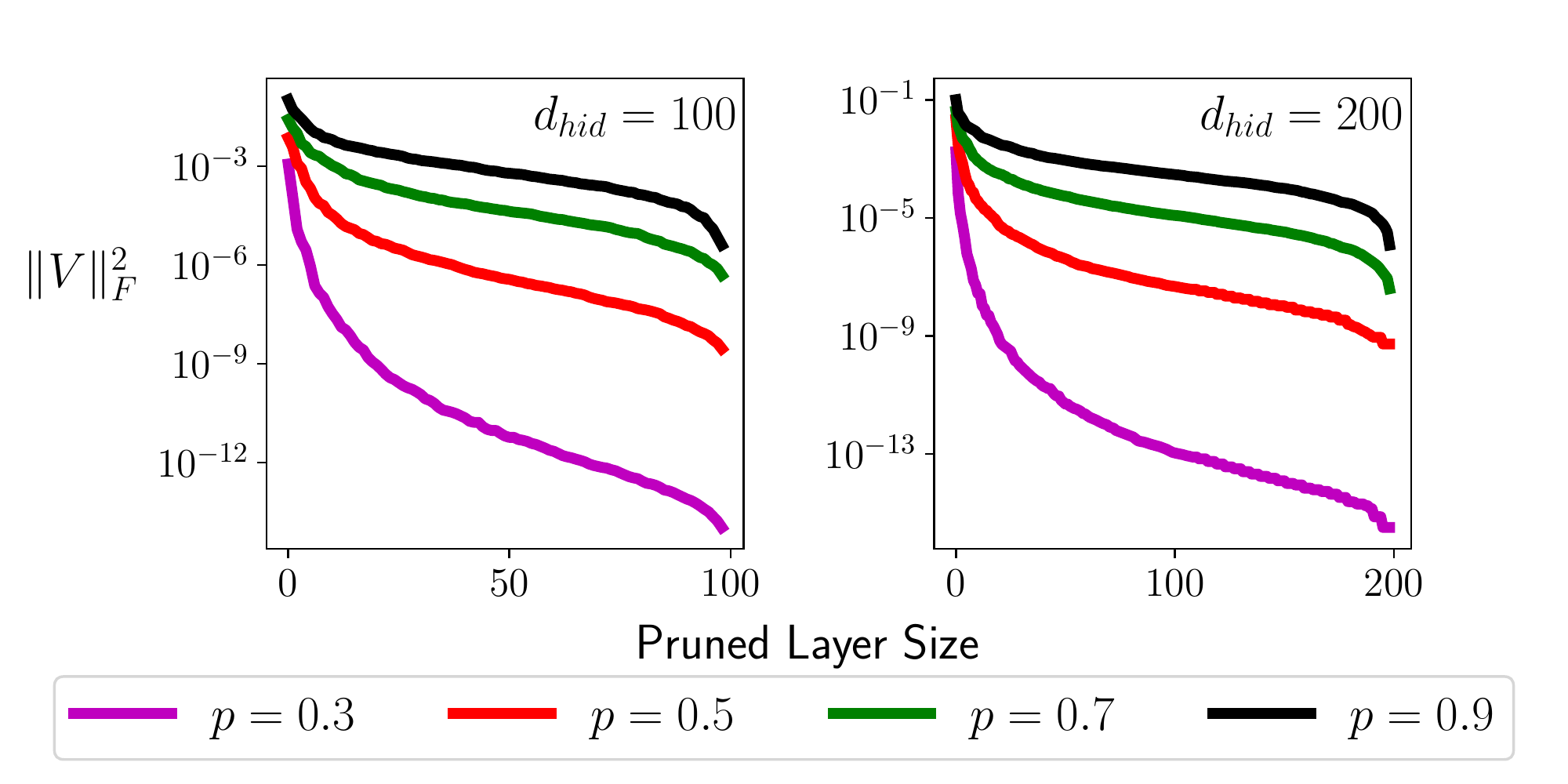}
    \caption{i-SpaSP pruning experiments for two-layer networks of different sizes and $p$ ratios. \emph{The error decay rate increases with decreasing $p$, confirming that more compressible hidden representations aid in the pruning process.}}
    \label{fig:synthetic} \vspace{-0.4cm}
\end{wrapfigure}

Within this section, we provide empirical analysis of i-SpaSP.
We first present synthetic results using two-layer neural networks to numerically verify Theorem \ref{T:final_bound}. 
Then, we perform experiments with two-layer networks on MNIST \citep{mnist}, convolutional neural networks (CNNs) on ImageNet (ILSVRC2012), and multi-lingual BERT (mBERT) \citep{devlin2018bert} on the cross-lingual NLI corpus (XNLI) \citep{conneau2018xnli}.
For all experiments, we adopt best practices from previous work \citep{conv_filt_prune} to determine pruning ratios within the dense network, often performing less pruning on sensitive layers; see Appendix \ref{A:exp_details} for details.
As baselines, we adopt both greedy selection methodologies \citep{provable_subnetworks, log_pruning} and several common, heuristic methods.
We find that, in addition to improving upon the pruning efficiency of GFS (i.e., see Section \ref{S:comp_complex} for runtime comparisons on ResNet pruning experiments), i-SpaSP performs comparably to baselines in all cases, demonstrating that it is both performant and efficient.



\subsection{Synthetic Experiments} \label{S:synthetic_exp}
To numerically verify Theorem \ref{T:final_bound}, we construct synthetic $H$ matrices with different row-compressibility ratios $p$, denoted as $\mathcal{H} = \{H^{(i)}\}_{i=1}^K$.
For each $p \in \{0.3, 0.5, 0.7, 0.9\}$, we randomly generate three unique entries within $\mathcal{H}$ (i.e., $K=12$) and present average performance across them all.
We prune a randomly-initialized $W^{(1)}$ from size $d_{hid} \times d_{out}$ to sizes $s \times d_{out}$ for $s \in [d_{hid}]$. (i.e., each setting of $s$ is a separate experiment) using $T = 20$.\footnote{We find that the active set of neurons selected by i-SpaSP becomes stable (i.e., few neurons are modified) after $20$ iterations or less in all synthetic experiments.}.
We test $d_{hid} \in \{100, 200\}$, and the same $W^{(1)}$ matrix is used for experiments with equal hidden dimension; see Appendix \ref{A:synthetic} for more details.


Results are displayed in Figure \ref{fig:synthetic}, where  $\|V\|_F^2$ is shown to decay polynomially with respect to the number of neurons within the pruned network $s$.
Furthermore, as predicted by Theorem \ref{T:final_bound}, the decay rate of $\|V\|^2_F$ increases as $p$ decreases, revealing that higher levels of sparsity within the dense network's hidden representations improve pruning performance and speed with respect to $s$. 
This trend holds for all hidden dimensions that were considered.

\subsection{Two-Layer Networks} \label{S:two_layer}
\begin{wrapfigure}{r}{0.6\textwidth}
    \vspace{-0.4cm}
    \centering
    \includegraphics[width=1\linewidth]{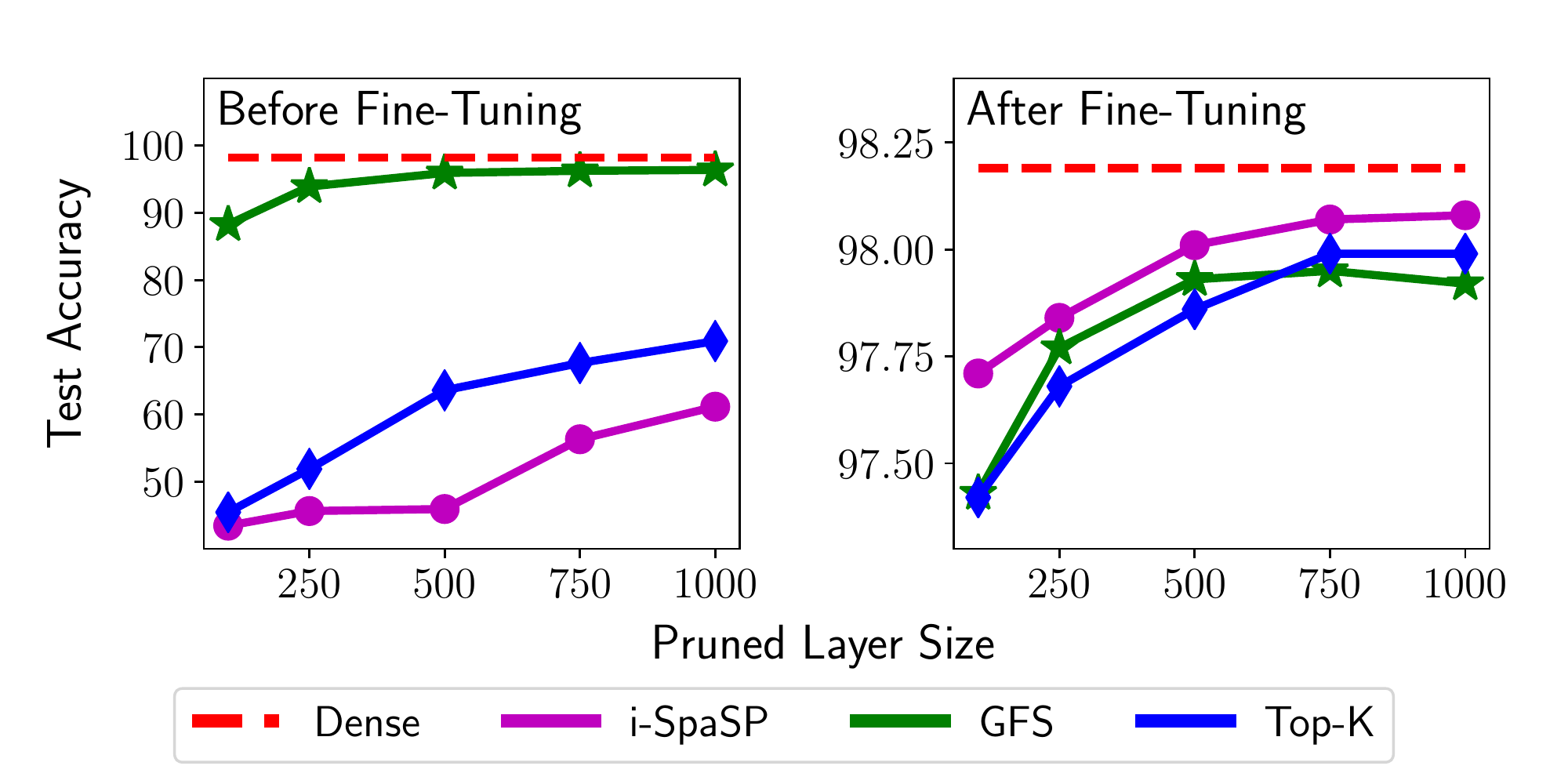}
    \caption{Performance of networks pruned with different greedy algorithms on MNIST before (left) and after (right) fine-tuning. Although GFS performs well prior to fine-tuning, \emph{i-SpaSP always yields the top-performing network after fine-tuning.}}
    \label{fig:mnist_prune} \vspace{-0.2cm}
\end{wrapfigure}

We perform pruning experiments with two-layer networks on MNIST \citep{mnist}.
All MNIST images are flattened and no data augmentation is used.
The dense network has $10 \times 10^{3}$ hidden neurons and is pre-trained before pruning.
We prune the dense network using i-SpaSP, GFS, and Top-K, which we use as a naive greedy selection baseline.\footnote{Top-k selects $k$ neurons with the largest-magnitude hidden representations in a mini-batch of data. It is a naive baseline for greedy selection that is not used in previous work to the best of our knowledge.}
After pruning, the network is fine-tuned using stochastic gradient descent (SGD) with momentum.
Performance is reported as an average across three separate trails; see Appendix \ref{A:two_layer} for more details. 

As shown in Figure \ref{fig:mnist_prune}-right, i-SpaSP outperforms other pruning methodologies after fine-tuning in all experimental settings.
Networks obtained with GFS perform well without fine-tuning (i.e., Figure \ref{fig:mnist_prune}-left) because neurons are selected to minimize loss during the pruning process.
Because i-SpaSP selects neurons based upon the importance criteria described in Section \ref{S:method}, the pruning process does not directly minimize training loss, thus leading to poorer performance prior to fine-tuning (i.e., Top-K exhibits similar behavior).
Nonetheless, i-SpaSP, in addition to improving upon the pruning efficiency of GFS, discovers a set of neurons that more closely recovers dense network output, as revealed by its superior performance after fine-tuning.

\subsection{Deep Convolutional Networks} \label{S:deep_net}
We perform structured filter pruning of ResNet34 \citep{resnet} and MobileNetV2 \citep{mobilenetv2} with i-SpaSP on ImageNet (ILSVRC 2012).
Beginning with public, pre-trained models \citep{paszke2017automatic}, we use i-SpaSP to prune chosen convolutional blocks within each network, then fine-tune the pruned model using SGD with momentum.
After pruning each block, we perform a small amount of fine-tuning; see Appendix \ref{A:deep_net} for more details.
Numerous heuristic and greedy selection-based algorithms are adopted as baselines; see Table \ref{tab:imagenet}.

\begin{wraptable}{r}{0.6\textwidth}
    \centering
    \begin{footnotesize}
    \begin{tabular}{c|c|cc}
    \toprule
          & Pruning Method & Top-1 Acc. & FLOPS \\
         \midrule
         \multirow{12}{*}{\rotatebox{90}{ResNet34}} & Full Model \cite{resnet} & 73.4 & 3.68G\\
         \cmidrule{2-4}
         & Filter Pruning \cite{conv_filt_prune} & 72.1 & 2.79G\\
         & Rethinking Pruning \cite{rethink_pruning} & 72.0 & 2.79G \\
         & More is Less \cite{dong2017more} & 73.0 & 2.75G \\
         & i-SpaSP & 73.5 & 2.69G\\
         & GFS \cite{provable_subnetworks} & 73.5 & 2.64G \\
         & Multi-Layer GFS \cite{log_pruning} & 73.5 & 2.20G\\
         \cmidrule{2-4}
         & SFP \cite{he2018soft} & 71.8 & 2.17G \\
         & FPGM \cite{he2019filter} & 72.5 & 2.16G \\
         & i-SpaSP & 72.5 & 2.13G\\
         & GFS \cite{provable_subnetworks} & 72.9 & 2.07G \\
         & Multi-Layer GFS \cite{log_pruning} & 73.3 & 1.90G \\
         \midrule
         \multirow{12}{*}{\rotatebox{90}{MobileNetV2}} & Full Model \cite{mobilenetv2} & 72.0 & 314M \\
         \cmidrule{2-4}
         & i-SpaSP & 71.6 & 260M\\
         & GFS \cite{provable_subnetworks} & 71.9 & 258M\\
         & Multi-Layer GFS \cite{log_pruning} & 72.2 & 245M\\
         & i-SpaSP & 71.3 & 242M\\
         \cmidrule{2-4}
         & LEGR \cite{chin2019legr} & 71.4 & 224M\\
         & Uniform & 70.0 & 220M \\
         & AMC \cite{he2018amc} & 70.8 & 220M\\
         & i-SpaSP & 70.7 & 220M \\
         & GFS \cite{provable_subnetworks} & 71.6 & 220M \\
         & Multi-Layer GFS \cite{log_pruning} & 71.7 & 218M \\
         & Meta Pruning \cite{liu2019metapruning} & 71.2 & 217M\\
    \bottomrule
    \end{tabular}
    \caption{Test accuracy of ResNet34 and MobileNetV2 models pruned to different FLOP levels with various pruning algorithms on ImageNet.}
        \label{tab:imagenet} \vspace{-0.5cm}
    \end{footnotesize}
\end{wraptable}

\medskip
\noindent \textbf{ResNet34.} 
We prune ResNet34 to 2.69 and 2.13 GFlops with i-SpaSP.
As shown in Table \ref{tab:imagenet}, i-SpaSP yields comparable performance to GFS variants at similar FLOP levels; e.g., i-SpaSP matches 75.5\% test accuracy of GFS with the 2.69 GFlop model and performs within 1\% of GFS variants for the 2.13 GFlop model.
However, the multi-layer variant of GFS \citep{log_pruning} does discover sub-networks with fewer FLOPS and similar performance in both cases.
i-SpaSP improves upon or matches the performance of all heuristic methods at similar FLOP levels.


\medskip
\noindent \textbf{MobileNetV2.} 
We prune MobileNetV2 to 260, 242, and 220 MFlops with i-SpaSP.
In all cases, sub-networks discovered with i-SpaSP achieve performance within 1\% of those obtained via both GFS variants and heuristic methods.
Although MobileNetV2 performance is relatively lower in comparison to ResNet34, i-SpaSP is still capable of pruning the more difficult network to various different FLOP levels and performs comparably to baselines in all cases.

\medskip
\noindent \textbf{Discussion.} Within Table \ref{tab:imagenet}, the performance of i-SpaSP is never more than 1\% below that of a similar-FLOP model obtained with a baseline pruning methodology, including both greedy selection and heuristic-based methods.
As such, similar to GFS, i-SpaSP can be seen as a theoretically-grounded pruning methodology that is practically useful, even in large-scale experiments.
Such pruning methodologies that are both theoretically and practically relevant are few.
In comparison to GFS variants, i-SpaSP significantly improves pruning efficiency; see Section \ref{S:comp_complex}.
Thus, i-SpaSP can be seen as a viable alternative to GFS---both theoretically and practically---that may be preferable when runtime is a major concern. 


\subsection{Transformer Networks} \label{S:transformers}

\begin{wraptable}{r}{0.5\textwidth}
    \centering
    \begin{footnotesize}
    \begin{tabular}{c|cc}
    \toprule
          Method & Pruning Ratio & Top-1 Acc. \\
          \midrule
          Full Model & - & 71.02\\
          \midrule \multirow{2}{*}{Uniform} & 25\% & 62.73 \\
          & 40\% & 66.43 \\
          \midrule \multirow{2}{*}{\cite{michel2019sixteen}} & 25\% & 62.97 \\
          & 40\% & 67.31 \\
          \midrule \multirow{2}{*}{i-SpaSP} & 25\% & 63.70\\
          & 40\% & 68.11\\
    \bottomrule
    \end{tabular}
    \caption{Test accuracy of mBERT models pruned and fine-tuned on the XNLI dataset.}
        \label{tab:xnli} \vspace{-0.4cm}
    \end{footnotesize}
\end{wraptable}

We perform structured pruning of attention heads using the mBERT model \citep{devlin2018bert} on the XNLI dataset \citep{conneau2018xnli}, which contains textual entailment annotations across 15 different languages.
We begin with a pre-trained mBERT model and fine-tune it on XNLI prior to pruning.
Then, structured pruning is performed on the attention heads of each layer using i-SpaSP, uniform pruning (i.e., randomly removing a fixed ratio of attention heads), and a sensitivity-based masking approach \citep{michel2019sixteen} (i.e., a state-of-the-art heuristic approach for structured attention head pruning for transformers).
After pruning models to a fixed ratio of 25\% or 40\% of original attention heads, we fine-tune the pruned networks and record their performance on the XNLI test set; see Appendix \ref{A:transformers} for further details. 


As shown in Table \ref{tab:xnli}, models pruned with i-SpaSP outperform those pruned to the same ratio with either uniform or heuristic pruning methods in all cases. 
The ability of i-SpaSP to effectively prune transformers demonstrates that the methodology can be applied to the structured pruning of numerous different architectures without significant implementation changes (i.e., using the automatic differentiation approach described in Section \ref{S:implementation}).
Furthermore, i-SpaSP even outperforms a state-of-the-art heuristic approach for the structured pruning of transformer attention heads, thus again highlighting the strong empirical performance of i-SpaSP in comparison to heuristic pruning methods that lack theoretical guarantees.

\section{Conclusion}
We propose i-SpaSP, a pruning methodology for neural networks inspired by sparse signal recovery.
Our methodology comes with theoretical guarantees that indicate, for both two and multi-layer networks, the quality of a pruned network decays polynomially with respect to its size.
We connect this theoretical analysis to properties of the dense network, showing that pruning performance improves as the dense network's hidden representations become more sparse.
Practically, i-SpaSP performs comparably to numerous baseline pruning methodologies in large-scale experiments and drastically improves upon the computationally efficiency of the most common provable pruning methodologies.
As such, i-SpaSP is a practical, provable, and efficient algorithm that we hope will enable a better understanding of neural network pruning both in theory and practice. 
In future work, we wish to extend i-SpaSP to cover cases in which network pruning and training are combined into a single process, such as utilizing regularization-based approaches to induce sparsity during pre-training or updating network weights to improve sub-network performance as pruning occurs.

\clearpage

\acks{Compute resources used for this work were provided by Alegion Inc. This work is supported by NSF FET:Small no. 1907936, NSF MLWiNS CNS no. 2003137 (in collaboration with Intel), NSF CMMI no. 2037545, NSF CAREER award no. 2145629, and Rice InterDisciplinary Excellence Award (IDEA).}

\bibliography{biblio}

\clearpage

\appendix
\section{Experimental Details} \label{A:exp_details}

\subsection{Synthetic Experiments} \label{A:synthetic}
Here, we present the details for generating the synthetic data used for pruning experiments with i-SpaSP on two-layer neural networks presented in Section \ref{S:synthetic_exp}.
It should be noted that the synthetic data generated within this set of experiments does not correspond to the input data matrix $X \in \mathbb{R}^{d_{in} \times B}$ described in Section \ref{S:prelim}.
Rather, the synthetic data that is generated $\{H^{(i)}\}_{i=1}^K$ corresponds to the hidden representation of a two-layer neural network, constructed as $H^{(i)} = \sigma(W^{(0)} \cdot X)$ for some input $X$.
Within these experiments, we generate hidden representations instead of raw input because the row-compressibility ratio $p$ considered within Theorem \ref{T:final_bound} is with respect to the neural network's hidden representations (i.e., not with respect to the input).

Consider the $i$-th synthetic hidden representation matrix $H^{(i)} \in \mathbb{R}^{d_{hid} \times B}$, and assume this matrix has a desired row-compressibility ratio $p$.
For all experiments, we set $B$ (i.e., the size of the dataset) to be 100.
Recall that all entries within $H^{(i)}$ must be non-negative due to the ReLU activation present within \eqref{eq:2layer_nn}.
For each row $j \in [d_{hid}]$ of the matrix, the synthetic hidden representation is generated by $i)$ computing the upper bound on the sum of the values within this $j$-th row as $\frac{R}{i^{\frac{1}{p}}}$ and $ii)$ randomly sampling $B$ non-negative values that, when summed together, do not exceed the upper bound.\footnote{In practice, this is implemented by keeping a running sum and continually sampling numbers randomly within a range that does not exceed the upper bound.}
Here, $R$ is a scalar constant as described in Section \ref{S:prelim}, which we set to $R=1$.
After this process has been completed for each row, the rows of $H^{(i)}$ are randomly shuffled so as to avoid rows being in sorted, magnitude order.

Within the experiments presented in Section \ref{S:synthetic_exp}, we generate three random matrices, following the process described above, for each combination of $d_{hid}$ and $p$.
Then, the average pruning results across each of these three matrices is reported for each experimental setting.
Experiments were also replicated with different settings of $R$, but the results observed were quite similar.
Further, the $W^{(1)}$ matrix used within the synthetic experiments of Section \ref{S:synthetic_exp} was generated using standard Kaiming initialization \citep{2015arXiv150201852H}, which is supported in deep learning packages like PyTorch \citep{paszke2017automatic}.

\subsection{Two-Layer Networks} \label{A:two_layer}
Within this section, we provide all relevant experimental details for pruning experiments with two-layer networks presented in Section \ref{S:two_layer}.

\begin{figure}
    \centering
    \includegraphics[width=0.9\linewidth]{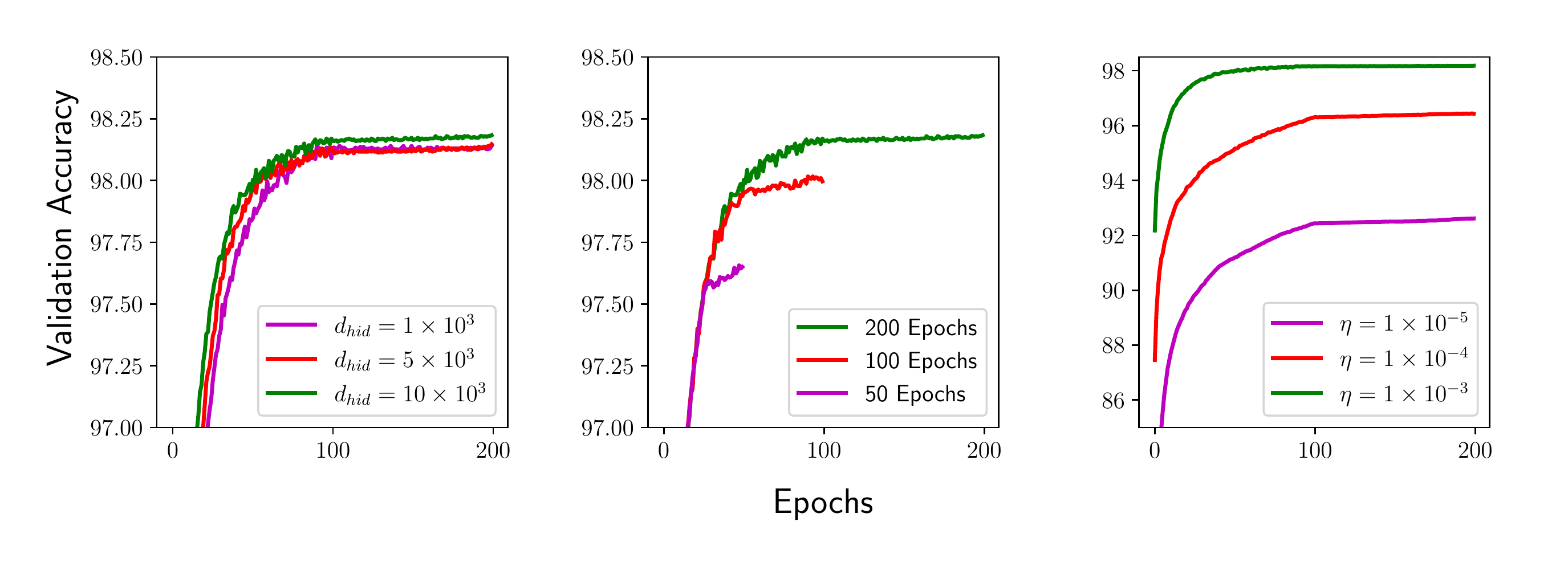}
    \caption{Performance of two-layer networks with different hyperparameter settings on the MNIST validation set. From left to right, subplots depict different hidden dimensions, pre-training epochs, and learning rates. For each of the plots, the best setting with respect to other hyperparameters is displayed.}
    \label{fig:two_layer_hyper}
\end{figure}
\medskip
\noindent \textbf{Baseline Network.}
We begin by describing the baseline two-layer network that was used within pruning experiments on MNIST, as well as relevant details for pre-training the network prior to pruning.
The network used within experiments in Section \ref{A:two_layer} exactly matches the formulation in \eqref{eq:2layer_nn}, and MNIST images are flattened---forming a 784-dimensional input vector---prior to being passed as input to the network.
The network is first pre-trained on the MNIST dataset such that it has fully converged before being used in pruning experiments.
During pre-training, we optimize the network with stochastic gradient descent (SGD) using a batch size of 128 and employ a learning rate step schedule that decays the learning rate $10\times$ after completing 50\% and 75\% of total epochs. 
For all network and pre-training hyperparameters, we identify optimal settings by dividing the MNIST training set randomly (i.e., a random, 80-20 split) into training and validation sets.
Then, performance is measured over the validation set using three separate trials and averaged to identify proper hyperparameter settings.

We find that weight decay and momentum settings do not significantly impact network performance.
Thus, we use no weight decay during pre-training and set momentum to 0.9.
The results of other hyperparameter tuning experiments are provided in Figure \ref{fig:two_layer_hyper}.
We test different hidden dimensions of the two-layer network $d_{hid} \in \{1 \times 10^{3},~5 \times 10^{3},~10 \times 10^{3}\}$, finding that validation performance reaches a plateau when $d_{hid} = 10 \times 10^3$.
We also test numerous possible learning rates $\eta \in \{1 \times 10^{-5},~1 \times 10^{-4},~1 \times 10^{-3}\}$ and pre-training epochs.
Within these experiments, it is determined that a learning rate of $\eta = 1\times 10^{-3}$ with 200 epochs of pre-training acheives the best performance on the validation set; as shown in the right and middle subplots of Figure \ref{fig:two_layer_hyper}.
Thus, the dense networks used for experiments within Section \ref{S:two_layer} has a hidden dimension of $10 \times 10^{3}$ and is pre-trained for 200 epochs using a learning rate of $1 \times 10^{-3}$ prior to pruning being performed.

\medskip
\noindent \textbf{Network Pruning.}
The two-layer network described above is pruned using several greedy selection strategies, including i-SpaSP, GFS, and Top-K.
Each of these pruning strategies selects a subset of neurons within the dense network's hidden layer based on a mini-batch of data from the training set.
Within all experiments, we adopt a batch size of 512 for pruning.

For i-SpaSP, the pruning procedure follows the exact steps outlined in Algorithm \ref{A:cosamp_prune}.
Within each iteration of Algorithm \ref{A:cosamp_prune}, a new mini-batch of data is sampled to compute neuron importance.
We use a fixed number of iterations as the stopping criterion for i-SpaSP.
In particular, we terminate the algorithm and return the pruned model after 20 iterations, as the active set of neurons is consistently stable at this point.

An in-depth description of GFS is provided in \citep{provable_subnetworks}.
For two-layer networks, GFS is implemented by, at each iteration, sampling a new mini-batch of data and finding the neuron that, when included in the (initially empty) active set, yields the largest decrease in loss over the mini-batch.
This process is repeated until the desired size of the pruned network has been reached---i.e., each iteration adds a single neuron to the pruned network and the size of the pruned network is used as a stopping criterion. 
Top-K is a naive, baseline method for greedy selection, which operates by computing hidden representations across the mini-batch and selecting the $k$ neurons with the largest-magnitude hidden activations (i.e., based on summing hidden representation magnitudes across the mini-batch). 
Top-K performs the entire pruning process in one step using a single mini-batch.

\subsection{Deep Convolutional Networks} \label{A:deep_net}
Within this section, we provide relevant details to the large-scale CNN experiments on ImageNet (ILSVRC2012) presented in Section \ref{S:deep_net}.

\medskip
\noindent \textbf{Pruning Deep Networks.}
As described in Section \ref{S:implementation}, i-SpaSP is extended to deep networks by greedily pruning each layer of the network from beginning to end.
We prune filters within each block of the network independently, where a block either consists of two $3 \times 3$ convolution operations separated by batch normalization and ReLU (i.e., \texttt{BasicBlock} for ResNet34) or a depth-wise, separable convolution with a linear bottleneck (i.e., \texttt{InvertedResidualBlock} for MobileNetV2).
We maintain the input and output channel resolution of each block---choosing instead to reduce each block's intermediate channel dimension---by performing pruning as follows:
\begin{itemize}
    \item \texttt{BasicBlock}: We obtain the output of the first convolution (i.e., after batch normalization and ReLU) and perform pruning based on the forward pass of the second convolution, thus eliminating filters from the output of the first convolution and, in turn, the input of the second convolution.
    \item \texttt{InvertedResidualBlock}: We obtain the output of the first $1 \times 1$ point-wise and  $3 \times 3$ depth-wise convolutions (i.e., after normalization and ReLU) and prune the filters of this representation based on the forward pass of the last $1 \times 1$ point-wise linear convolution, thus removing filters from the output of the first point-wise convolution and, in turn, within the depth-wise convolution operation.
\end{itemize}
Thus, each convolutional block is separated into two components.
Input data is passed through the first component to generate a hidden representation, then i-SpaSP performs pruning using forward and backward passes of the second component as in Algorithm \ref{A:cosamp_prune}. 
Intuitively, one can view the two components of each block as the two neural network layers in \eqref{eq:2layer_nn}.
Due to implementation with automatic differentiation, pruning convolutional layers is nearly identical to pruning two-layer network as described in Section \ref{S:implementation}, the only difference being the need to sum over both batch and spatial dimensions in computing importance.

\medskip
\noindent \textbf{Pruning Ratios.}
As mentioned within Section \ref{S:deep_net}, some blocks within each network are more sensitive to pruning than others.
Thus, pruning all layers to a uniform ratio typically does not perform well.
Within ResNet34, four groups of convolutional blocks exist, where each group contains a sequence of convolutional blocks with the same channel dimension.
Following the recommendations of \cite{conv_filt_prune}, we avoid pruning the fourth group of convolutional blocks, any strided blocks, or the last block in any group.
The 2.69 GFlop model in Table \ref{tab:imagenet} uses a uniform ratio of 40\% for all three groups (i.e., 60\% of filters are eliminated from each block). 
For the 2.13 GFlop model, the first group is pruned to a ratio of 20\%, while the second and third groups are pruned to a ratio of 30\%. 

Within MobileNetV2, we simply avoid pruning network blocks that are either strided or have different input and output channel dimensions, leaving the following blocks to be pruned: 3, 5, 6, 8, 9, 10, 12, 13, 15, 16.
We find that blocks 3, 5, 8, 15, and 16 are especially sensitive to pruning (i.e., pruning these layers causes noticeable performance degradation), so we avoid aggressive pruning upon these blocks.
The 260 MFlop model is generated by pruning only blocks 6, 9, 10, 12, and 13 to a ratio of 40\%.
For the 242 MFlop and 220 MFlop models, we prune sensitive (i.e., blocks 3, 4, 8, 15, and 16) and not sensitive (i.e., blocks 6, 9, 10, 12, and 13) blocks with ratios of 40\%/80\% and 30\%/70\%, respectively.
For the 220 MFlop model, we also prune blocks 2, 11, and 17 to a ratio of 90\% to further decrease the FLOP count. 

\medskip 
\noindent \textbf{Pruning Hyperparameters.}
Pruning of each layer is performed with a batch of 1280 data examples for both ResNet34 and MobileNetV2 architectures.
This batch of data can be separated into multiple mini-batches (e.g., of size 256) during pruning.
Such an approach allows the computation of i-SpaSP to be performed on a GPU (i.e., all of the data examaples cannot simultaneously fit within memory of a typical GPU), but this modification does not change the runtime of behavior of i-SpaSP.
Furthermore, 20 total i-SpaSP iterations are performed in pruning each layer.
We find that adding more data or pruning iterations beyond these settings does not provide any benefit to i-SpaSP performance.

\medskip
\noindent \textbf{Fine-tuning Details.}
For fine-tuning, we perform 90 epochs of fine-tuning using SGD with momentum and employ a cosine learning rate decay schedule over the entire fine-tuning process beginning with a learning rate of 0.01.
For fine-tuning between the pruning of layers (i.e., a small amount of fine-tuning is performed after each layer is pruned), we simply adopt the same fine-tuning setup with a fixed learning rate of 0.01 and fine-tuning continues for one epoch.
Our learning rate setup for fine-tuning matches that of \citep{provable_subnetworks, log_pruning}, but we choose to perform fewer epochs of fine-tuning, as we find network performance does not improve much after 90 epochs. 
In all cases, we adopt standard augmentation procedures for the ImageNet dataset during fine-tuning (i.e., normalizing the data and performing random crops and flips). 

\subsection{Transformer Networks} \label{A:transformers}
Within this section, we provide all relevant experimental details for pruning experiments with two-layer networks presenting in Section \ref{S:transformers}.

\medskip
\noindent \textbf{Baseline Network.} We begin by describing the baseline network used within the structured attention head pruning experiments.
The network architecture used in all experiments is the multi-lingual BERT (mBERT) model \citep{devlin2018bert}.
More specifically, we adopt the BERT-base-cased variant of the architecture, which is pre-trained on a corpus of over 100 different languages.
Beginning with this pre-trained network architecture, we fine-tune the BERT model on the XNLI dataset for two epochs with the AdamW optimizer and a learning rate of $5 \times 10^{-5}$ that is decayed linearly throughout the training process. 
After the fine-tuning process, this model achieves an accuracy of 71.02\% on the XNLI test set. 

\medskip
\noindent \textbf{Network Pruning.} For all pruning experiments, we adopt a structured pruning approach that selectively removes entire attention heads from network layers. 
We adopt two different pruning ratios: 25\% and 40\%.\footnote{Here, the pruning ratio refers to the number of attention heads remaining after pruning. For example, a 12-head layer pruned at a ratio of 25\% would have three remaining heads after pruning completes.}
These ratios correspond to three and five attention heads (i.e., each layer of mBERT originally has 12 attention heads) being retained within each layer of the network.

We compare the performance of i-SpaSP to a uniform pruning approach (i.e., randomly select attention heads to prune within each layer) and a sensitivity-based, global pruning approach that was previously proposed for structured pruning of attention heads \citep{michel2019sixteen}.
For i-SpaSP and uniform pruning, one layer is pruned at a time, starting from the first layer of the network and ending at the final layer.
For the sensitivity-based pruning technique, the pruning process is global and all layers are pruned simultaneously.
After pruning has been completed, networks are fine-tuned for 0.5 epochs using the AdamW optimizer with a learning rate of $5\times 10^{-5}$ that is decayed linearly throughout the fine-tuning process.
For i-SpaSP and the sensitivity-based pruning approach, we adopt the same setting as previous experiments (i.e., see Section \ref{S:deep_net}) and utilize a subset of 1280 training data examples for pruning.
Additionally, i-SpaSP performs 20 total iterations for the pruning of each layer, which again reflects the settings used within previous experiments. 

\section{Main Proofs} \label{A:proofs}

\subsection{Properties of $\mu(\cdot)$}
Prior to providing any proofs of our theoretical results, we show a few properties of the function $\mu(\cdot)$ defined over matrices that will become useful in deriving later results.
Given $A \in \mathbb{R}^{m \times n}$, $\mu (A): \mathbb{R}^{m \times n} \longrightarrow \mathbb{R}^m$ and is defined as follows:

\begin{align*}
    \mu(A) = \sum\limits_{i=1}^{n} A_{:, i} 
\end{align*}

In words, $\mu(A)$ sums all columns within an arbitraty matrix $A$ to produce a single column vector.
We now provide a few useful properties of $\mu(\cdot)$.
\begin{lemma} \label{L:mu}
    Consider two arbitrary matrices $A \in \mathbb{R}^{m \times n}$ and $B \in \mathbb{R}^{n \times p}$. The following properties of $\mu(\cdot)$ must be true.
    \begin{align*}
        \mu(A\cdot B) &= A \cdot \mu(B)\\
        \mu(A + B) &= \mu(A) + \mu(B)
    \end{align*}
\end{lemma}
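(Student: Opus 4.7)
The proof of Lemma \ref{L:mu} is essentially an exercise in bookkeeping with column sums, so the plan is short and direct. The only thing to verify is that the column-sum operator $\mu(\cdot)$ commutes with left-multiplication by a matrix and with matrix addition; both will follow from elementary properties of matrix multiplication and the linearity of summation.

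For the first identity, the plan is to write out $\mu(A \cdot B)$ as a sum of the columns of the product. The key fact I will invoke is the standard column-wise identity $(A \cdot B)_{:, i} = A \cdot B_{:, i}$, which lets me pull $A$ outside the sum over $i$. Concretely, I expect the chain
\begin{align*}
    \mu(A \cdot B) = \sum_{i=1}^{p} (A \cdot B)_{:, i} = \sum_{i=1}^{p} A \cdot B_{:, i} = A \cdot \sum_{i=1}^{p} B_{:, i} = A \cdot \mu(B),
\end{align*}
where the third equality uses the distributivity of matrix multiplication over vector addition. This pins down the first identity.

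For the second identity, the plan is even simpler: expand both sides by definition. Since $(A + B)_{:, i} = A_{:, i} + B_{:, i}$ for each $i \in \{1, \dots, n\}$, rearranging the resulting finite sum yields
\begin{align*}
    \mu(A + B) = \sum_{i=1}^{n} (A + B)_{:, i} = \sum_{i=1}^{n} A_{:, i} + \sum_{i=1}^{n} B_{:, i} = \mu(A) + \mu(B),
\end{align*}
which is the second claim.

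There is no serious obstacle here; the lemma is purely definitional. The only potential pitfall is keeping the index dimensions consistent across the two statements (the first sums over the $p$ columns of $AB$ and $B$, while the second sums over the $n$ columns of $A+B$), but this is handled just by naming the summation index carefully. Once these two identities are in hand, they can be applied freely throughout the subsequent proofs, e.g.\ to rewrite expressions like $\mu(W^{(1)} \cdot H)$ as $W^{(1)} \cdot \mu(H)$ when bounding $\|V_t\|_F$.
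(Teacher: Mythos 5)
Your proposal is correct and follows essentially the same argument as the paper: expand $\mu(\cdot)$ as a column sum, use $(A\cdot B)_{:,i} = A\cdot B_{:,i}$ to pull $A$ out of the sum, and use entrywise column addition for the second identity. Your note on the summation dimensions is, if anything, slightly more careful than the paper's own write-up, which reuses the index range $p$ for the sum in the second identity.
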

\begin{proof}
Given the matrices $A \in \mathbb{R}^{m \times n}$ and $B \in \mathbb{R}^{n \times p}$, we can arrive at the first property as follows:
\begin{align*}
    \mu(A\cdot B) &= \sum\limits_{i=1}^p (A\cdot B)_{:, i}
    = \sum\limits_{i=1}^p A \cdot B_{:, i} 
    = A \cdot \left ( \sum\limits_{i=1}^p B_{:, i} \right)
    = A\cdot \mu(B)
\end{align*}
\noindent Furthermore, the second property can be shown as follows:
\begin{align*}
    \mu(A + B) &= \sum\limits_{i=1}^p (A + B)_{:, i}
    = \sum\limits_{i=1}^p A_{:, i} + B_{:, i}
    = \sum\limits_{i=1}^p A_{:, i} + \sum\limits_{j=1}^p B_{:, j}
    = \mu(A) + \mu(B)
\end{align*}

\end{proof}

\subsection{Proof of Lemma \ref{L:hidden_res}} \label{A:hidden_res_proof}
Here, we provide a proof for Lemma \ref{L:hidden_res} from Section \ref{S:theory}.
\medskip
\begin{proof}
Consider an arbitrary iteration of Algorithm \ref{A:cosamp_prune}.
We define the active neurons within the pruned network entering this iteration as $\mathcal{S}$.
The notation $\mathcal{S}'$ will be used to refer to the active neurons in the pruned model after the completion of the iteration.
Similarly, we will use the $\cdot '$ notation to refer to other constructions after the iteration completes (e.g., $V$ becomes $V'$, $R$ becomes $R'$, etc.)

We begin by defining and expanding the definitions of the matrices $U$, $V$, $R$, and $Y$ considering the substitution of $H = Z + E$;
\begin{align*}
    U &= W^{(1)} H = W^{(1)} (Z + E), \\
    V &= U - W^{(1)}H_{\mathcal{S}, :} = W^{(1)} (Z + E) - W^{(1)}(Z + E)_{\mathcal{S}, :}, \\
    R &= Z - Z_{\mathcal{S}, :}, \\
    Y &= (W^{(1)})^T V.
\end{align*}
\noindent Now, observe the following about $Y$.
\begin{align*}
    Y &= (W^{(1)})^\top \cdot V\\
    &= (W^{(1)})^\top \left ( W^{(1)}(Z + E) - W^{(1)}(Z + E
    )_{\mathcal{S}, :} \right ) \\
    &= (W^{(1)})^\top \left ( W^{(1)} Z + W^{(1)} E - W^{(1)} (Z_{\mathcal{S}, :}) - W^{(1)} (E_{\mathcal{S}, :}) \right) \\
    &= (W^{(1)})^\top \left (W^{(1)} (Z - Z_{\mathcal{S}, :}) + W^{(1)} (E - E_{\mathcal{S}, :}) \right) \\
    &= (W^{(1)})^\top \left (W^{(1)}(Z - Z_{\mathcal{S}, :}) + W^{(1)} \hat{E} \right) \\
    &= (W^{(1)})^\top W^{(1)}(Z - Z_{\mathcal{S}, :}) + (W^{(1)})^\top W^{(1)} \hat{E},
\end{align*}
where we denote $\hat{E} = E - E_{\mathcal{S}, :}$.
It should be noted that $\hat{E}$ is simply the matrix $E$ with all the $i$-th rows set to zero where $i \in \mathcal{S}$. 
Invoking Lemma \ref{L:mu} in combination with the above expression, we can show the following.
\begin{align*}
    \mu(Y) &= (W^{(1)})^\top W^{(1)} \mu(Z - Z_{\mathcal{S}, :}) + (W^{(1)})^\top W^{(1)} \mu(\hat{E}) \\
    &= (W^{(1)})^\top W^{(1)} \left (\mu(R) + \mu(\hat{E}) \right).
\end{align*}

Based on the definition of $H = Z + E$, $R$ must be $s$-row-sparse.
This is because $Z$ is $s$-row-sparse and $R = Z - Z_{\mathcal{S}, :}$.
Thus, the number of non-zero rows within $R$ must be less than or equal to $s$ (i.e., the rows that are not subtracted out of $Z$ by $Z_{\mathcal{S}, :}$).
Denote $\Delta = \rsupp(R)$, where $|\Delta| \leq s$.
From the definition of $\Omega$ in Algorithm \ref{A:cosamp_prune}, we have the following properties of $Y$:
\begin{align*}
    ~~~~~~\|\mu(Y)|_{\Delta}\|_2 \leq \|\mu(Y)|_{\Omega}\|_2 \quad
    \stackrel{i)}{\Longrightarrow} \quad \|\mu(Y)|_{\Delta \textbackslash \Omega}\|_2 \leq \|\mu(Y)|_{\Omega \textbackslash \Delta}\|_2
\end{align*}
where $i)$ follows by removing indices within $\Omega \cap \Delta$.
From here, we can upper bound the value of $\|\mu(Y)|_{\Omega \textbackslash \Delta}\|_2$ as follows.
\begin{align*}
    \|\mu(Y)|_{\Omega \textbackslash \Delta}\|_2 &= \left\| \left((W^{(1)})^\top W^{(1)}\mu(R) \right)|_{\Omega \textbackslash \Delta} + \left((W^{(1)})^\top W^{(1)} \mu(\hat{E}) \right)|_{\Omega \textbackslash \Delta} \right\|_2 \\
    &= \left\| (W^{(1)}_{:, \Omega \textbackslash \Delta})^\top W^{(1)}\mu(R) + (W^{(1)}_{:, \Omega \textbackslash \Delta})^\top W^{(1)} \mu(\hat{E}) \right\|_2 \\
    &\stackrel{i)}{\leq} \left\| (W^{(1)}_{:, \Omega \textbackslash \Delta})^\top W^{(1)}\mu(R) \right\|_2 + \left \| (W^{(1)}_{:, \Omega \textbackslash \Delta})^\top W^{(1)} \mu(\hat{E}) \right\|_2 \\
    &\stackrel{ii)}{\leq} \delta_{3s} \|\mu(R)\|_2  + \left \| (W^{(1)}_{:, \Omega \textbackslash \Delta})^\top W^{(1)} \mu(\hat{E}) \right\|_2 \\
    &\stackrel{iii)}{\leq} \delta_{3s} \|\mu(R)\|_2 + \sqrt{1 + \delta_{2s}} \|W^{(1)} \mu(\hat{E})\|_2 \\
    &\stackrel{iv)}{\leq} \delta_{3s} \|\mu(R)\|_2 + (1 + \delta_{2s}) \|\mu(\hat{E})\|_2 + \frac{1 + \delta_{2s}}{\sqrt{2s}}\|\mu(\hat{E})\|_1
\end{align*}
where $i)$ follows from the triangle inequality, $ii)$ holds from Corrolary 3.3 on RIP properties of $W^{(1)}$ in \citep{cosamp} because $|\Omega \cup \Delta| \leq 3s$,
$iii)$ holds from Proposition 3.1 on RIP properties of $W^{(1)}$ in \citep{cosamp} because $|\Omega \textbackslash \Delta| \leq 2s$, and $iv)$ holds from Proposition 3.5 on RIP properties of $W^{(1)}$ in \citep{cosamp}.
Now, we can also lower bound the value of $\|\mu(Y)|_{\Delta \textbackslash \Omega}\|_2$ as follows:
\begin{align*}
    \|\mu(Y)|_{\Delta \textbackslash \Omega}\|_2 &= \left \| \left((W^{(1)})^\top W^{(1)}\mu(R)\right)|_{\Delta \textbackslash \Omega} + \left ((W^{(1)})^\top W^{(1)} \mu(\hat{E}) \right )|_{\Delta \textbackslash \Omega}  \right \|_2 \\ 
    &= \left \| (W^{(1)}_{:, \Delta \textbackslash \Omega})^\top W^{(1)} \mu(R) + (W^{(1)}_{:, \Delta \textbackslash \Omega})^\top W^{(1)} \mu(\hat{E}) \right \|_2 \\
    &= \left \| (W^{(1)}_{:, \Delta \textbackslash \Omega})^\top W^{(1)} \mu(R)|_{\Delta \textbackslash \Omega} + (W^{(1)}_{:, \Delta \textbackslash \Omega})^\top W^{(1)} \mu(R)|_{\Omega} + (W^{(1)}_{:, \Delta \textbackslash \Omega})^\top W^{(1)} \mu(\hat{E}) \right \|_2 \\
    &\stackrel{i)}{\geq} \left \| (W^{(1)}_{:, \Delta \textbackslash \Omega})^\top W^{(1)} \mu(R)|_{\Delta \textbackslash \Omega}\right\|_2 - \left\|(W^{(1)}_{:, \Delta \textbackslash \Omega})^\top W^{(1)} \mu(R)|_{\Omega}\right \|_2 - \left\| (W^{(1)}_{:, \Delta \textbackslash \Omega})^\top W^{(1)} \mu(\hat{E}) \right \|_2 \\
    &\stackrel{ii)}{\geq} (1 - \delta_s)\|\mu(R)|_{\Delta \textbackslash \Omega}\|_2 - \left\|(W^{(1)}_{:, \Delta \textbackslash \Omega})^\top W^{(1)} \mu(R)|_{\Omega}\right \|_2 - \left\| (W^{(1)}_{:, \Delta \textbackslash \Omega})^\top W^{(1)} \mu(\hat{E}) \right \|_2 \\
    &\stackrel{iii}{\geq} (1 - \delta_s)\|\mu(R)|_{\Delta \textbackslash \Omega}\|_2 - \delta_s \|\mu(R)|_{\Omega}\|_2 - \sqrt{1 + \delta_s}\|W^{(1)}\mu(\hat{E})\|_2\\
    &\stackrel{iv)}{\geq}(1 - \delta_s)\|\mu(R)|_{\Delta \textbackslash \Omega}\|_2 - \delta_s \|\mu(R)\|_2 - (1 + \delta_{2s})\|\mu(\hat{E})\|_2 + \frac{1 + \delta_{2s}}{\sqrt{2s}}\|\mu(\hat{E})\|_1 
\end{align*}
where $i)$ follows from the triangle inequality.
$ii)$ follows from Proposition 3.1 in \citep{cosamp} and $iii)$ follows from Corollary 3.3 and Proposition 3.1 in \citep{cosamp} because $|\Delta \textbackslash \Omega| \leq s$.
Finally, $iv)$ follows from Proposition 3.5 within \citep{cosamp}.
By noting that $\mu(R)|_{\Delta \textbackslash \Omega} = \mu(R)|_{\Omega^c}$ and combining the lower and upper bounds above, we derive the following:
\begin{align} \label{eq:r_upperlower_no_sparse}
    \|\mu(R)|_{\Omega^c}\|_2 \leq \frac{(\delta_s + \delta_{3s})\|\mu(R)\|_2 + 2\left((1 + \delta_{2s})\|\mu(\hat{E})\|_2 + \frac{1 + \delta_{2s}}{\sqrt{2s}}\|\mu(\hat{E})\|_1 \right) }{1 - \delta_s}
\end{align}

\noindent From here, we can show the following:
\begin{equation} \label{eq:romegac_no_sparse}
\begin{split}
    \|\mu(R)|_{\Omega^c}\|_2 &\stackrel{i)}{\geq} \|\mu(R)|_{\Omega^{\star c}}\|_2\\
    &= \|\mu(Z - Z_{\mathcal{S}, :})|_{\Omega^{\star c}}\|_2 \\
    &= \|(\mu(Z) - \mu(Z)|_{\mathcal{S}})|_{\Omega^{\star c}} \|_2 \\
    &\stackrel{ii)}{=} \|\mu(Z)|_{\Omega^{\star c}}\|_2 \\
    &= \|\mu(H - E)|_{\Omega^{\star c}}\|_2 \\
    &=\| \mu(H)|_{\Omega^{\star c}} - \mu(E)|_{\Omega^{\star c}} \|_2 \\
    &\stackrel{iii)}{\geq} \|\mu(H)|_{\Omega^{\star c}}\|_2 - \|\mu(E)|_{\Omega^{\star c}}\|_2
\end{split}
\end{equation}
where $i)$ follows from the fact that $\Omega \subseteq \Omega^\star$, $ii)$ follows from the fact that $\mathcal{S} \subset \Omega^\star$, and $iii)$ follows from the triangle inequality.
Then, as a final step, we make the following observation, where we leverage notation from Algorithm \ref{A:cosamp_prune}:
\begin{equation} \label{eq:prune_full_ds}
\begin{split}
    \|\mu(H) - \mu(H)|_{\mathcal{S}'}\|_2 &= \| \mu(H) - b_s \|_2 \\
    &= \| (\mu(H) - b) + (b - b_s) \|_2 \\
    &\stackrel{i)}{\leq} \|\mu(H) - b\|_2 + \|b - b_s\|_2 \\
    &\stackrel{ii)}{\leq} 2\|\mu(H) - b\|_2 \\
    &= 2\|\mu(H) - \mu(H)|_{\Omega^\star}\|_2
\end{split}
\end{equation}
where $i)$ follow from the triangle inequality and $ii)$ follows from the fact that $b_s$ is the best $s$-sparse approximation to $b$ (i.e., $\mu(H)$ is a worse $s$-sparse approximation with respect to the $\ell_2$ norm). 
Now, noticing that $\|\mu(H) - \mu(H)|_{\Omega^{\star}}\|_2 = \|\mu(H)|_{\Omega^{\star c}}\|_2$, we can aggregate all of this information as follows:
\begin{align*}
    \|\mu(H) - \mu(H)|_{\mathcal{S}'}\|_2 - 2\|\mu(\hat{E})\|_2 &\stackrel{i)}{\leq} \|\mu(H) - \mu(H)|_{\mathcal{S}'}\|_2 - 2\|\mu(E)|_{\Omega^{\star c}}\|_2 \\
    &\stackrel{ii)}{\leq} 2 \left(\|\mu(H) - \mu(H)|_{\Omega^{\star}}\|_2 - \|\mu(E)|_{\Omega^{\star c}}\|_2 \right) \\
    &= 2 \left (\|\mu(H)|_{\Omega^{\star c}}\|_2 - \|\mu(E)|_{\Omega^{\star c}}\|_2 \right) \\
    &\stackrel{iii)}{\leq} 2\|\mu(R)|_{\Omega^c}\|_2\\
    &\stackrel{iv)}{\leq} \frac{2(\delta_s + \delta_{3s})\|\mu(R)\|_2 + 4\left((1 + \delta_{2s})\|\mu(\hat{E})\|_2 + \frac{1 + \delta_{2s}}{\sqrt{2s}}\|\mu(\hat{E})\|_1 \right) }{1 - \delta_s}
\end{align*}
where $i)$ holds because  $\mathcal{S} \subset \Omega^\star$, $ii)$ holds from \eqref{eq:prune_full_ds}, $iii)$ holds from \eqref{eq:romegac_no_sparse}, and $iv)$ hold from \eqref{eq:r_upperlower_no_sparse}.
By simply moving the $\|\mu(\hat{E})\|_2$ term to the other side of the inequality, we get the following:
\begin{align*}
    \|\mu(H) - \mu(H)|_{\mathcal{S}'}\|_2 \leq \frac{2(\delta_s + \delta_{3s})\|\mu(R)\|_2 + (6 + 4\delta_{2s} - 2\delta_s)\|\mu(\hat{E})\|_2 + \frac{4(1 + \delta_{2s})}{\sqrt{2s}}\|\mu(\hat{E})\|_1 }{1 - \delta_s}
\end{align*}
\noindent From here, we recover $\mu(H) - \mu(H)|_{\mathcal{S}}$ from $\mu(R)$ as follows:
\begin{align*}
    \|\mu(R)\|_2 &= \|\mu(Z - Z|_{\mathcal{S}})\|_2\\
    &= \|\mu(H - E - (H - E)|_{\mathcal{S}})\|_2 \\
    &= \|\mu(H - H|_{\mathcal{S}}) - \mu(E - E|_{\mathcal{S}})\|_2 \\
    &\stackrel{i)}{\leq} \|\mu(H - H|_{\mathcal{S}})\|_2 + \|\mu(E - E|_{\mathcal{S}})\|_2 \\
    &\leq \|\mu(H) - \mu(H)|_{\mathcal{S}}\|_2 + \|\mu(\hat{E})\|_2
\end{align*}
where $i)$ holds from the triangle inequality.
Combining this with the final inequality derived above, we arrive at the following recursion for error between pruned and dense model hidden layers over iterations of Algorithm \ref{A:cosamp_prune}:
\begin{align*}
        \|\mu(H) - \mu(H)|_{\mathcal{S}'}\|_2 \leq \tfrac{2(\delta_s + \delta_{3s})\|\mu(H) - \mu(H)|_{\mathcal{S}}\|_2 + (6 + 2\delta_{3s} + 4\delta_{2s})\|\mu(\hat{E})\|_2 + \frac{4(1 + \delta_{2s})}{\sqrt{2s}}\|\mu(\hat{E})\|_1 }{1 - \delta_s}
\end{align*}

We now adopt an identical numerical assumption as \citep{cosamp} and assume that $W^{(1)}$ has restricted isometry constant $\delta_{4s} \leq 0.1$.
Then, noting that $\delta_s, \delta_{2s}, \delta_{3s} \leq \delta_{4s} \leq 0.1$ per Corollary 3.4 in \citep{cosamp}, we substitute the restricted isometry constants into the above recursion to yield the following expression:
\begin{align*}
        \|\mu(H) - \mu(H)|_{\mathcal{S}'}\|_2 &\leq 0.444\|\mu(H) - \mu(H)|_{\mathcal{S}}\|_2 + 7.333\|\mu(\hat{E})\|_2 + \frac{4.888}{\sqrt{2s}}\|\mu(\hat{E})\|_1\\
        &= 0.444\|\mu(H) - \mu(H)|_{\mathcal{S}}\|_2 + 7.333\|\mu(\hat{E})\|_2 + \frac{3.456}{\sqrt{s}}\|\mu(\hat{E})\|_1
\end{align*}
If we invoke Lemma \ref{L:mu}, unroll the above recursion over $t$ iterations of Algorithm \ref{A:cosamp_prune}, notice it is always true that $\|\hat{E}\|_2 \leq \|E\|_2$ and $\|\hat{E}\|_1 \leq \|E\|_1$, and draw upon the property of $\ell_1$ and $\ell_2$ vector norms that $\|\cdot\|_2 \leq \|\cdot\|_1$ we arrive at the desired result:
\begin{align*}
    \|\mu(H - H_{\mathcal{S}_t, :})\|_2 &= \|\mu(H) - \mu(H)|_{\mathcal{S}_t}\|_2 \\
    &\leq (0.444)^t\|\mu(H) - \mu(H)|_{\mathcal{S}_0}\|_2 + 14 \|\mu(E)\|_2 + \frac{7}{\sqrt{s}}\|\mu(E)\|_1 \\
    &\leq (0.444)^t\|\mu(H)\|_2 + \left(14 + \frac{7}{\sqrt{s}}\right)\|\mu(E)\|_1
\end{align*}


\end{proof}

\subsection{Proof of Theorem \ref{T:vfrob}}
Here, we provide a proof of Theorem \ref{T:vfrob} from Section \ref{S:theory}.

\medskip
\begin{proof}
Consider an arbitrary iteration $t$ of Algorithm \ref{A:cosamp_prune} with a set of active neurons within the pruned model denoted by $\mathcal{S}_t$.
The matrix $V_t$ contains the residual between the dense and pruned model output over the entire dataset, as formalized below:
\begin{align*}
    V_t &= U - (W^{(1)} \cdot H_{\mathcal{S}_t, :})\\
    &= (W^{(1)}\cdot H) - (W^{(1)} \cdot H_{\mathcal{S}_t, :}) \\
    &= W^{(1)} \cdot (H - H_{\mathcal{S}_t, :})
\end{align*}

\noindent Consider the squared Frobenius norm of the matrix $V_t$.
We can show the following:
\begin{align*}
    \|V_t\|_F^2 &= \|W^{(1)} \cdot (H - H_{\mathcal{S}_t, :}) \|_F^2 \\
    &= \sum\limits_{i=1}^{d_{out}} \sum\limits_{j=1}^B \left (\sum\limits_{z=1}^{d_{hid}} W^{(1)}_{iz} (H - H_{\mathcal{S}_t, :})_{zj} \right)^2 \\
    &= \sum\limits_{i=1}^{d_{out}} \sum\limits_{j=1}^B \left (W^{(1)}_{i, :}\cdot (H - H_{\mathcal{S}_t, :})_{:, j} \right)^2 \\
    &= \sum\limits_{i=1}^{d_{out}} \sum\limits_{j=1}^B \left (|W^{(1)}_{i, :}\cdot (H - H_{\mathcal{S}_t, :})_{:, j}| \right)^2 \\
    &\stackrel{i)}{\leq} \sum\limits_{i=1}^{d_{out}} \sum\limits_{j=1}^B \left ( \|W^{(1)}_{i, :}\|_2 \cdot \|(H - H_{\mathcal{S}_t, :})_{:, j}\|_2 \right)^2 \\
    &= \sum\limits_{i=1}^{d_{out}} \sum\limits_{j=1}^B \|W^{(1)}_{i, :}\|_2^2 \cdot \|(H - H_{\mathcal{S}_t, :})_{:, j}\|_2^2 \\
    &= \sum\limits_{i=1}^{d_{out}} \sum\limits_{j=1}^B \left ( \sum\limits_{k=1}^{d_{hid}} (W^{(1)}_{ik})^2 \right) \left( \sum\limits_{z=1}^{d_{hid}} (H - H_{\mathcal{S}_t, :})^2_{zj} \right) \\
    &= \left ( \sum\limits_{i=1}^{d_{out}} \sum\limits_{k=1}^{d_{hid}} (W^{(1)}_{ik})^2 \right ) \left ( \sum\limits_{j=1}^B \sum\limits_{z=1}^{d_{hid}} (H - H_{\mathcal{S}_t, :})_{zj}^2 \right) \\
    &=  \|W^{(1)}\|_F^2 \left (\sum\limits_{z=1}^{d_{hid}} \sum\limits_{j=1}^B (H - H_{\mathcal{S}_t}, :)_{zj}^2 \right)\\
    &\stackrel{ii)}{\leq} \|W^{(1)}\|_F^2 \left (\sum\limits_{z=1}^{d_{hid}} \mu(H - H_{\mathcal{S}_t, :})_z^2 \right)\\
    &= \|W^{(1)}\|_F^2 \cdot \|\mu(H - H_{\mathcal{S}_t, :})\|_2^2
\end{align*}
where $i)$ follows from the Cauchy-Schwarz inequality and $ii)$ follows from the fact that all entries of $H - H_{\mathcal{S}_t, :}$ are non-negative. 
From here, we simply invoke Lemma \ref{L:hidden_res} to arrive at the desired result.

\begin{align*}
    \|V_t\|_F &\leq \|W^{(1)}\|_F \cdot \|\mu(H - H_{\mathcal{S}_t, :})\|_2  \\&\leq \|W^{(1)}\|_F \cdot \left ( (0.444)^t \|\mu(H)\|_2 + \left ( 14 + \frac{7}{\sqrt{s}}\right)\|\mu(E)\|_1 \right)
\end{align*}

\end{proof}

\subsection{Proof of Lemma \ref{L:noise_bound}}
Here, we provide the proof for Lemma \ref{L:noise_bound} from Section \ref{S:theory}.

\medskip
\begin{proof}
Assume that we choose the $s$-row-sparse $Z$ matrix within $H = Z + E$ as follows
\begin{align*}
    Z = H_{\mathcal{D}, :},~\text{where}~\mathcal{D} = \argmin_{\mathcal{D} \subset [d_{hid}]} \|\mu(H) - \mu(H)|_{\mathcal{D}}\| 
\end{align*}
The norm used within the above expression is arbitrary, as we simply care to select indices $\mathcal{D}$ such that $\mu(H)|_{\mathcal{D}}$ contains the largest-magnitude components of $\mu(H)$.
Such a property will be true for any $\ell_p$ norm selection within the above equation, as they all ensure $\mu(H)|_{\mathcal{D}}$ is the best possible $s$-sparse approximation of $\mu(H)$.
From here, we have that $E = H - H_{\mathcal{D}}$, which yields the following:
\begin{align*}
    \|\mu(E)\|_1 &= \|\mu(H - H_{\mathcal{D}})\|_1 \\
    &\stackrel{i)}{=} \|\mu(H) - \mu(H)|_{\mathcal{D}}\|_1 \\
    &\stackrel{ii)}{\leq} R \cdot \frac{s^{1 - \frac{1}{p}}}{\frac{1}{p} - 1}
\end{align*}
where $i)$ holds from Lemma \ref{L:mu} and $ii)$ is a bound on the $\ell_1$ norm of the best-possible $s$-sparse approximation of a $p$-compressible vector; see Section 2.6 in \citep{cosamp}.

\end{proof}

\subsection{Proof of Theorem \ref{T:multi_layer}} \label{A:multi_layer}
Here, we generalize our theoretical results beyond single-hidden-layer networks to arbitrary depth networks.
For this section, we define the forward pass in a $L$-hidden-layer network via the following recursion:
\begin{align*}
    H^{(\ell)} = \sigma (W^{(\ell - 1)} \cdot H^{(\ell - 1)})
\end{align*}
where $\sigma$ denotes the nonlinear ReLU activation applied at each layer and we have weight matrices $\mathcal{W} = \{W^{(0)}, W^{(1)}, \dots, W^{(L)} \}$ and hidden representations $\mathcal{H} = \{H^{(0)}, H^{(1)}, \dots, H^{(L)} \} $ for each of the $L$ layers within the network.
Here, $H^{(L)}$ denotes the network output and $H^{(0)} = \sigma(W^{(0)} \cdot X)$.
It should be noted that no ReLU activation is applied at the final network output (i.e., $H^{(L)} = W^{(L)} \cdot H^{(L - 1)}$).
We denote the pre-activation values each hidden representation as $U^{(\ell)} = W^{(\ell)} \cdot H^{(\ell - 1)}$.

\medskip
\begin{proof}
We denote the active set of neurons after $t$ iterations at each layer $\ell$ as $\mathcal{S}^{(\ell)}_t$.
The dense network output is given by $U^{(L)} = H^{(L)}$, and we compute the residual between pruned and dense networks at layer $\ell$ as $V^{(\ell)}_t = U^{(\ell)} - W^{(\ell)}_{:, \mathcal{S}^{(\ell)}_t} \cdot H^{(\ell - 1)}_{\mathcal{S}^{(\ell)}_t, :}$.
We further denote $U_{t}^{(\ell)'} = W^{(\ell)}_{:, \mathcal{S}^{(\ell)}_t} \cdot H^{(\ell - 1)}_{\mathcal{S}^{(\ell)}_t, :}$ as shorthand for intermediate, pre-activation outputs of the pruned network.
For simplicity, we assume all hidden layers of the multi-layer network have the same number of hidden neurons $d$ and are pruned to a size $s$, such that $s \ll d$.
Consider the final output representation of an $L$-hidden-layer network $H^{(L)}$.
We will now bound $\|V^{(L)}_t\|_F$, revealing that a recursion can be derived over the layers of the network to upper bound the norm of the final layer's residual between pruned and dense networks.

Following previous proofs, we decompose the hidden representation at each layer prior to the output layer as $H^{(\ell)} = Z^{(\ell)} + E^{(\ell)} + \Delta^{(\ell)}$, where $Z^{(\ell)}$ is an $s$-row-sparse matrix and $E^{(\ell)}$ and $\Delta^{(\ell)}$ are both arbitrary-valued matrices.
$E^{(\ell)}$ denotes the same arbitrarily-valued matrix from the previous $H = Z + E$ formulation and $\Delta^{(\ell)}$ captures all error introduced by  pruning layers prior to $\ell$ within the network.
$E^{(\ell)}$ and $\Delta^{(L)}$ can also be combined into a single matrix as $\Xi^{(\ell)} = E^{(\ell)} + \Delta^{(\ell)}$.
Now, we consider the value of $\|V^{(L)}_t\|_F$:
\begin{align}
    \|V^{(L)}_t\|_F &\stackrel{i}{=} \|W^{(L)}\|_F \cdot \left( (0.444)^t \|\mu(H^{(L - 1)})\|_2 + \left(14 + \frac{7}{\sqrt{s}}\right)\|\mu(\Xi^{(L - 1)})\|_1 \right) \nonumber\\
    &= \|W^{(L)}\|_F \cdot \left( (0.444)^t \|\mu(H^{(L - 1)})\|_2 + \left(14 + \frac{7}{\sqrt{s}}\right)\|\mu(E^{(L - 1)}) + \mu(\Delta^{(L - 1)})\|_1 \right) \nonumber \\
    &\stackrel{ii}{\leq}\|W^{(L)}\|_F \cdot \left( (0.444)^t \|\mu(H^{(L - 1)})\|_2 + \left(14 + \frac{7}{\sqrt{s}}\right) \left (\|\mu(E^{(L - 1)})\|_1 + \|\mu(\Delta^{(L - 1)})\|_1 \right ) \right) \nonumber \\
    &\stackrel{iii}{\approx} \|W^{(L)}\|_F \cdot \left(14 + \frac{7}{\sqrt{s}}\right) \left (\|\mu(E^{(L - 1)})\|_1 + \|\mu(\Delta^{(L - 1)})\|_1 \right ) \label{eq:multi_last_layer}
\end{align}
where $i$ holds from Theorem \ref{T:vfrob}, $ii$ holds from the triangle inequality, and $iii$ holds by eliminating all terms that approach zero with enough pruning iterations $t$.
Now, notice that $\|\mu(\Delta^{(L - 1)})\|_1$ characterizes the error induced by pruning within the previous layer of the network, following the ReLU activation.
Thus, our expression for the error induced by pruning on network output is now expressed with respect to the pruning error of the previous layer, revealing that a recursion can be derived for pruning error over the entire multi-layer network.
We now expand the expression for $\|\mu(\Delta^{(\ell)})\|_1$ (i.e., we use arbitrary layer $\ell$ to enable a recursion over layers to be derived).

\begin{align*}
    \|\mu(\Delta^{(\ell)})\|_1 &\stackrel{i)}{=} \| \mu( \sigma(U^{(\ell)}) - \sigma(U_t^{(\ell)'}))\|_1 \\
    &\stackrel{ii)}{\leq} \|\mu(\sigma(U^{(\ell)} - U_t^{(\ell)'}))\|_1 \\
    &= \|\mu(\sigma(V^{(\ell)}_t))\|_1 \\
    &= \left \|\mu \left (\sigma \left (W^{(\ell)} \cdot \left (H^{(\ell - 1)} - H^{(\ell - 1)}_{\mathcal{S}^{(\ell - 1)}_t, :} \right ) \right ) \right) \right \|_1 \\
    &\stackrel{iii)}{\leq} \sum_{i=1}^{d} \sum_{j=1}^B \left | \left ( \sum_{z=1}^d W_{iz}^{(\ell)}\left (H^{(\ell - 1)} - H^{(\ell - 1)}_{\mathcal{S}^{(\ell - 1)}_t, :} \right )_{zj} \right ) \right |  \\
    &= \sum_{i=1}^d \sum_{j=1}^B \left | W^{(\ell)}_{i, :} \cdot \left (H^{(\ell - 1)} - H^{(\ell - 1)}_{\mathcal{S}^{(\ell - 1)}_t, :} \right )_{:, j} \right | \\
    &\stackrel{iv)}{\leq} \sum_{i=1}^d \sum_{j=1}^B \|W^{(\ell)}_{i, :} \|_1 \left \| \left (H^{(\ell - 1)} - H^{(\ell - 1)}_{\mathcal{S}^{(\ell - 1)}_t, :} \right )_{:, j} \right \|_1\\
    &= \sum_{i=1}^d \sum_{j=1}^B \left ( \sum_{k=1}^d \left |W^{(\ell)}_{ik} \right | \right) \left ( \sum_{z=1}^d \left| \left (H^{(\ell - 1)} - H^{(\ell - 1)}_{\mathcal{S}^{(\ell - 1)}_t, :} \right )_{zj} \right| \right ) \\
    &= \left ( \sum_{i=1}^{d} \sum_{k=1}^d \left| W^{(\ell)}_{ik} \right | \right ) \left (\sum_{z=1}^d \sum_{j=1}^B \left| \left (H^{(\ell - 1)} - H^{(\ell - 1)}_{\mathcal{S}^{(\ell - 1)}_t, :} \right )_{zj} \right| \right ) \\
    &= \left \|\texttt{vec}(W^{(\ell)}) \right \|_1 \cdot \left \|\mu \left ( H^{(\ell - 1)} - H^{(\ell - 1)}_{\mathcal{S}^{(\ell - 1)}_t, :}\right ) \right \|_1 \\
    &\stackrel{v)}{\leq} \sqrt{d} \left \|\texttt{vec}(W^{(\ell)}) \right \|_1 \cdot \left \|\mu \left ( H^{(\ell - 1)} - H^{(\ell - 1)}_{\mathcal{S}^{(\ell - 1)}_t, :}\right ) \right \|_2 \\
    &\stackrel{vi)}{\leq} \sqrt{d} \left \|\texttt{vec}(W^{(\ell)}) \right \|_1 \left ( (0.444)^t \|\mu(H^{(\ell - 1)})\|_2 + (14 + \frac{7}{\sqrt{s}}) \|\mu(\Xi^{(\ell - 1)})\|_1 \right) \\
    &\stackrel{vii)}{\leq} \sqrt{d} \left \|\texttt{vec}(W^{(\ell)}) \right \|_1 \left ( (0.444)^t \|\mu(H^{(\ell - 1)})\|_2 + (14 + \frac{7}{\sqrt{s}}) \left (\| \mu(E^{(\ell - 1)} \|_1 + \|\mu(\Delta^{(\ell - 1)}\|_1 \right) \right) \\
    &\approx \sqrt{d} \left \|\texttt{vec}(W^{(\ell)}) \right \|_1 (14 + \frac{7}{\sqrt{s}}) \left (\| \mu(E^{(\ell - 1)} \|_1 + \|\mu(\Delta^{(\ell - 1)}\|_1 \right)
\end{align*}
where $i)$ holds because $\Delta^{(\ell)}$ simply characterizes the difference between pruned and dense network hidden representations, $ii)$ holds due to properties of the ReLU function, $iii)$ hold by expanding the expression as a sum and replacing the ReLU operation with absolute value, $iv)$ holds due to the Cauchy Schwarz inequality, $v)$ holds due to properties of $\ell_1$ norms, $vi)$ holds due to Lemma \ref{L:hidden_res}, and $vii)$ holds due to the triangle inequality.
As can be seen within the above expression, the value of $\|\mu(\Delta^{(\ell)})\|_1$ can be expressed with respect to $\|\mu(\Delta^{(\ell - 1)})\|_1$.
Now, by beginning with \eqref{eq:multi_last_layer} and unrolling the equation above over all $L$ layers of the network, we obtain the following
\begin{align*}
    \|V^{(L)}_t\|_F \leq \mathcal{O} \left ( \sum_{i=1}^L d^{\frac{L-i}{2}} \left (14 + \frac{7}{\sqrt{s}}\right)^{L - i + 1} \left \|\mu(E^{(i)}) \right \|_1 \left ( \|W^{(L)}\|_F \prod_{j=1}^{L - i} \|\texttt{vec}(W^{(j)})\|_1 \right) \right )
\end{align*}
Now, assume that $H^{(\ell)}$ is $p^{(\ell)}$-row-compressible with factor $R^{(\ell)}$ for $\ell \in [L - 1]$.
Then, defining $p = \max_i p^{(i)}$ and $R = \max_i R^{(i)}$, we invoke Lemma \ref{L:noise_bound} to arrive at the desired result
\begin{align*}
    \|V^{(L)}_t\|_F \leq \mathcal{O} \left ( \sum_{i=1}^L \left (14 + \frac{7}{\sqrt{s}}\right)^{L - i + 1} \left (\|W^{(L)}\|_F   \prod_{j=1}^{L - i} \|\texttt{vec}(W^{(j)})\|_1 \right)  \left (\frac{ d^{\frac{L-i}{2}} s^{1 - \frac{1}{p}}}{ \frac{1}{p} - 1} \right) \right )
\end{align*}
where $R$ is factored out because it has no asymptotic impact on the expression.

\end{proof}

\end{document}